\newcommand{\BlackBox}{\rule{1.5ex}{1.5ex}}  % end of proof
\newtheorem{assumption}{Assumption} 
\newtheorem*{assumption*}{Assumption}
\newtheorem{thm}{Theorem}
\newtheorem{lemma}{Lemma}
\newtheorem{defn}{Definition}
\def\bbE{\mathbb{E}}
\def\bbP{\mathbb{P}}
\def\bbR{\mathbb{R}}
\def\cH{\mathcal{H}}
\def\cX{\mathcal{X}}
\def\cC{\mathcal{C}}
\def\cO{\mathcal{O}}
\def\cP{\mathcal{P}}
\def\cB{\mathcal{B}}
\def\cS{\mathcal{S}}
\def\cR{\mathcal{R}}
\def\cF{\mathcal{F}}
\def\la{\langle}
\def\ra{\rangle}
\def\nabf{\nabla_{\theta}f}
\def\nabR{\nabla \cR}
\def\Rbetastar{\cR_{\beta}^{*}}
\def\pi{p_{i}}
\def\thetahatT{\hat{\theta}_T}
\def\cR{\mathcal{R}}
\def\thetatminus{\theta_{t-1}}
\def\bhatT{\hat{b}_T}
\newcommand{\defeq}{\mbox{$\;\stackrel{\mbox{\tiny\rm def}}{=}\;$}}
\newcommand{\indep}{{\bot\negthickspace\negthickspace\bot}}
\newcommand{\leqa}{\mbox{$\;\stackrel{\mbox{\tiny\rm (a)}}{\leq}\;$}}
\newcommand{\leqb}{\mbox{$\;\stackrel{\mbox{\tiny\rm (b)}}{\leq}\;$}}
\DeclareMathOperator{\sgn}{sgn}
\DeclarePairedDelimiter\abs{\lvert}{\rvert}%
\DeclarePairedDelimiter\norm{\lVert}{\rVert}%
\let\oldabs\abs
\def\abs{\@ifstar{\oldabs}{\oldabs*}}
\let\oldnorm\norm
\def\norm{\@ifstar{\oldnorm}{\oldnorm*}}
\begin{document}
\title{Active Model Aggregation via Stochastic Mirror Descent}
\date{}
\author{Ravi Ganti\\gantimahapat@wisc.edu\thanks{ Most of the work was done while the author was at Georgia Institute of Technology, Atlanta, GA, USA.}}
\affil{Wisconsin Institutes for Discovery,\\ 330 N Orchard St, Madison, WI, 53715}
\maketitle
\begin{abstract}
We consider the problem of learning convex aggregation of models, that is as good as the best convex aggregation, for the binary classification problem. Working in the stream based active learning setting, where the active learner has to make a decision on-the-fly, if it wants to query for the label of the point currently seen in the stream, we propose a stochastic-mirror descent algorithm, called SMD-AMA, with entropy regularization. We establish an excess risk bounds for the loss of the convex aggregate returned by SMD-AMA to be of the order of $O\left(\sqrt{\frac{\log(M)}{{T^{1-\mu}}}}\right)$, where $\mu\in [0,1)$ is an algorithm dependent parameter, that trades-off the number of labels queried, and excess risk. %Large $\mu$ leads to a smaller lower bound on the number of queries made, but a larger upper bound on the excess risk of the convex aggregate returned by SMD-AMA. 
We demonstrate experimental results on standard UCI datasets, and show that, compared to a passive learning, SMD-AMA queries 10-68\% of the points, to get to similar accuracy as a passive learner.
\end{abstract}
\section{Introduction}
\label{sec:agg_intro}
%% General funda about active learning
Machine learning techniques have become popular in many fields such as Astronomy, Physics, Biology, Chemistry, Web Search, Finance. With the availability of large amounts of data, and greater computational power, newer machine learning algorithms, and applications have been discovered. A very popular subclass of machine learning problems fall under the category of supervised learning problems. Classification and regression are two most popular problems in supervised learning, where the learner is provided with labeled data, and is required to predict the labels of unseen points. In the case of classification problems, these labels are discrete, whereas in the case of regression problems, these labels are continuous.

Supervised learning critically relies on the presence of labeled data. The cost of obtaining labels for different data points depends on the problem domain. For example, in Astronomy it is easy to get access to tons and tons of unlabeled data.  However, obtaining labeled data is usually hard. In problems of speech recognition, information extraction, obtaining labeled data is often tedious and requires domain expertise~\citep{zhu2005ssl,settles2008active,settlestr09}. In such cases, a natural question that arises is that can we learn with limited supervision?

In the classical passive, binary classification problem one has access to labeled samples $\cS=\{(x_1,y_1),\ldots,(x_T,y_T)\}$, drawn from an unknown distribution $P$ defined on a domain $\cX\times \{-1,+1\}$, where $\cX\subset \bbR^d$. The points $\{x_1,\ldots,x_T\}$ are sampled i.i.d. from the marginal distribution $P_{\cX}$, and the labels $y_1,\ldots,y_T$ are sampled from the conditional distribution $P_{Y|X=x}$. Algorithms such as SVMs, logistic regression choose a hypothesis class $\cH$, and an appropriate loss function $L(\cdot)$, and solve some sort of an empirical risk minimization problem to return a hypothesis $\hat{h}\in\cH$, whose risk, $R(\hat{h})\defeq \bbE_{x,y\sim P} L(y\hat{h}(x))$ is small. 
% Start writing about aggregation of models
Given a collection of models $\cB=\{b_1,\ldots,b_M\}$, the collection of convex aggregations of these models is a much richer class. Based on this idea many algorithms in machine learning and in approximation theory have been proposed which learn an aggregation of basic models. Ensemble methods~\citep{dietterich2000ensemble,schapire2003boosting} are methods that combine a large number of simple models to learn a single powerful model. Boosting algorithms such as AdaBoost~\citep{freund1995desicion}, and LogitBoost~\citep{friedman2000additive} can be viewed as performing aggregation via functional gradient descent~\citep{mason1999functional}. The key idea here is minimization of a convex loss, exponential loss in the case of AdaBoost, and logistic loss in the case of LogitBoost, via a sequential aggregation of models in $\cF$. In the case of boosting, the set of base models are weak learners, and by aggregating the models we aim to  boost the learning capabilities of the final aggregated model. The problem of model aggregation for regression models  was first proposed by~\citet{nemirovski2000topics}. Given a collection of models $\cB=\{b_1,\ldots,b_M\}$,~\citet{nemirovski2000topics} outlined three problems of model aggregation, namely, model selection, convex aggregation and linear aggregation. \textbf{In this paper we are interested in actively learning a convex aggregation of models for the binary classification problem}. Let $b(x)\defeq [b_1(x),\ldots,b_M(x)]\in\{-1,+1\}^M$, and $\la \cdot,\cdot\ra$ denote the Euclidean dot product. Given a convex, margin based loss function $L:\bbR\rightarrow\bbR_{+}$, we want a procedure that outputs a model, $f(x)=\sum_{j=1}^M \beta_j b_j(x)$ in the convex hull of $\cB$,  whose excess risk, when compared to the best model in the convex hull of $\cB$ satisfies the inequality
\begin{equation*}
  \bbE L(y\la\beta ,b(x)\ra)\leq \min_{\theta\in \Delta_M}  \bbE L(y\la\theta,b(x)\ra)+\delta_{T,M},
\end{equation*}
where $\delta_{T,M}>0$ is a small remainder term that goes to 0 as $T\rightarrow \infty$, and the expectation is w.r.t. all the random variables involved.  In order to construct such a $\beta$ vector, we assume that we have access to an unlabeled stream of examples $x_1,x_2,\ldots$, drawn i.i.d. from the underlying distribution $P$ defined on $\cX$. This problem matches the one posed by Nemirovski, except that instead of being given labeled data, we now have access to an unlabeled data stream, and are allowed to query for the labels which we believe are most informative.

~\citet{juditsky2005recursive} studied the original version of the convex aggregation problem posed by Nemirovski, which involved learning the best convex aggregation given that one has access to a stream of labeled examples sampled i.i.d. from the underlying distribution. They introduced an online, stochastic mirror descent algorithm for the problem of learning the best convex aggregation of models. We shall call their method SMD-PMA~\footnote{SMD-PMA stands for \underline{S}tochastic \underline{M}irror \underline{D}escent based \underline{P}assive \underline{M}odel \underline{A}ggregation }. They showed that by making one pass of the stochastic mirror descent algorithm, and by averaging the iterates obtained after each step of the algorithm, the resulting convex aggregate has excess risk of $O\left(\sqrt{\frac{\log(M)}{T}}\right)$, where $M$  is the number of models being aggregated, and $T$ is the number of samples seen in the stream.  Essentially, SMD-PMA  is a slight modification of the stochastic mirror descent algorithm, applied to the stochastic optimization problem 
\begin{equation*}
\min_{\theta\in \Delta_M}\bbE L(y\la\theta,b(x)\ra),
\end{equation*} 
where $\Delta_M$ is the standard $M-1$ dimensional probability simplex. Since the constraint set is a simplex, the entropy regularizer was used in SMD-PMA. The stochastic mirror descent procedure is followed by an averaging step, that allows the authors to obtain excess risk bounds.

\textbf{Contributions.}  In this paper, we are interested in learning convex aggregation of models, with the help of actively labeled data. We consider a streaming setting, where we are given unlabeled points $x_1,x_2,\ldots$ and an oracle $\cO$. The oracle $\cO$, when provided as an input $x$ in $\cP$, returns a label $y\in \{-1,+1\}\sim \bbP[Y|X=x]$. We present an algorithm which is essentially a one-pass, stochastic mirror-descent based active learning algorithm, called SMD-AMA~\footnote{\underline{S}tochastic \underline{M}irror \underline{D}escent based \underline{A}ctive \underline{M}odel \underline{A}ggregation}, which solves the stochastic optimization problem $\min_{\theta\in\Delta_M} \bbE_{x,y}[L(y\la\theta,b(x)\ra)]$. Since we are dealing with simplex constraints, we use the entropy function as the regularization function in our stochastic mirror descent algorithm. In round $t$ of SMD-AMA, we query for the label of point $x_t$, with probability $p_t$. This allows us to construct an unbiased stochastic subgradient of the objective function at the current iterate. If the length of the stream is $T$ points, then SMD-AMA returns the hypothesis $\bhatT(x)\defeq \la \thetahatT,b(x)\ra$, $\thetahatT\in \Delta_M$. We show that the excess risk of the hypothesis $\bhatT$, w.r.t. the best convex aggregate of models, scales with the number of models as $\sqrt{\log(M)}$, and decays with the number of points, $T$, as $\frac{1}{\sqrt{T^{1-\mu}}}$, where $\mu\in [0,1)$ is an algorithmic parameter. The mild dependence on the number of models, $M$, allows us to use a large number of models, which is desirable when we are learning convex aggregation of models. 

The paper is organized as follows. In Section~\ref{sec:agg_alg}, after briefly reviewing the  stochastic mirror descent algorithm, we introduce our algorithm SMD-AMA. In Section~\ref{sec:agg_excess_risk} we present an excess risk bound for the hypothesis returned by our active learning algorithms. Section~\ref{sec:agg_related} reviews related work, and Section~\ref{sec:agg_exp} compares our proposed algorithm to a passive learning algorithm, and a previously proposed ensemble based active learning algorithm.
%%%%% Talk about aggregation problems
\section{A brief introduction to mirror descent} Mirror descent (MD)~\citep{beck2003mirror} is a first order optimization algorithm used to solve convex optimization problems of the form $\min_{x\in \cC} f(x)$. MD assumes the presence of a (sub)gradient oracle, which when provided a point $x\in\cC$, returns the sub-gradient $\nabla f(x)$. In each iteration of the mirror descent algorithm, given the current iterate $x_t$, MD solves a minimization problem of the form $\min_{x\in \cC} \la \nabla f(x_t),x\ra+D_{\cR}(x,x_t)$, where $\nabla f(x_t)$ is the sub-gradient of $f$ at $x_t$ and $D_{\cR}(\cdot,\cdot)$ is the Bregman divergence corresponding to a strictly convex function $\cR$, which we shall call as the regularization function. The power of the MD algorithm stems from the fact that by an appropriate choice of the distance generating function, one could adapt to the geometry of the constraint set $\cC$. The classical mirror descent algorithm has also been extended to stochastic optimization problems of the form $\min_{x\in\cC}\{f(x)=\bbE_{\omega} F(x;\omega)\}$, to yield the stochastic mirror descent (SMD) algorithm~\citep{nemirovski2009robust,lan2011validation}. In SMD, we assume that an oracle, provides us with an unbiased estimate of the gradient to the stochastic objective.
\section{Algorithm Design}
\label{sec:agg_alg}
Standard analysis of stochastic mirror descent algorithm assumes that we have access to a stochastic (sub)gradient oracle which provides an unbiased estimate of the gradient of the objective function at any point in the domain. The stochastic optimization problem that we are trying to solve is 
\begin{equation}
  \label{eqn:stochastic_opt}
  \min_{\theta\in \Delta_M} \{f(\theta)\defeq \bbE L(y\la\theta,b(x)\ra)\}.
\end{equation}
A naive application of the stochastic mirror descent method would require, in each iteration, to obtain a stochastic subgradient of $\bbE L(y\la\theta,b(x)\ra)$. In iteration $t$, if the current iterate is $\theta_{t-1}$, then a stochastic subgradient of $f(\theta)$ at $\theta=\theta_{t-1}$ is given by 
\begin{equation}
  \label{eqn:stocsubgrad}
  \nabla f(\theta_{t-1})= L'(y_t\la\theta_{t-1},b(x_t)\ra)y_tb(x_t),
\end{equation} 
where $L'(\cdot)$ is the subderivative of $L$ at the given argument. If in round $t$, we decided to query for the label of the point $x_t$, then one can calculate the stochastic subgradient using Equation~\ref{eqn:stocsubgrad}. However, if we decided not to query for the label of $x_t$, then the stochastic subgradient, which  depends on the unknown label $y_t$ of $x_t$, cannot be calculated. While one could, in such a case, consider the stochastic subgradient to be the zero vector, this is no longer an unbiased estimate of the subgradient. This is problematic, as the classical analysis of stochastic mirror descent, assumes that one has access to unbiased estimates of the subgradient of the objective function. In order to counter this problem we use the idea of importance weighting. 

\textbf{Importance weighted subgradient estimates:} In order to use importance weights, in round $t$ upon seeing a point $x_t$, we query $x_t$ with probability $p_t$. Suppose $Q_t$ is a $\{0,1\}$ random variable, which takes the value 1, if $x_t$ was queried, and takes the value 0, if $x_t$ was not queried. Let $Z_t\defeq y_t Q_t$. We shall also make the following independence assumption.
\begin{assumption}
  \label{ass:indep}
  $ p_t\indep y_t|x_t,x_{1:t-1},Z_{1:t-1}$,
\end{assumption}
where $Z_{1:t-1}$ is the collection of random variables $Z_1,\ldots,Z_{t-1}$.
Consider the following importance-weighted stochastic subgradient $g_t\defeq\frac{Q_t}{p_t}L'(y_t\la \theta_{t-1},b(x_t)\ra)y_tb(x_t)$. It is easy to see that, under Assumption~\ref{ass:indep}, given $\theta_{t-1}$, $g_t$ is an unbiased estimator of $\nabf(\theta_{t-1})$. Hence, the above importance weighted stochastic subgradient  allows us to get unbiased estimates of the gradient of the objective function in Equation~\ref{eqn:stochastic_opt}. Importance weighting was first introduced for stream based active learning problems by~\citet{beygelzimer2009importance}. The use of importance weights helps us construct unbiased estimators of the loss of any hypothesis in a hypothesis class.  This is particularly favourable, when the hypothesis class used for obtaining actively labeled dataset, goes out of favour.%One of the most attractive features of importance weighted active learning is that the importance weighted sample provides an unbiased estimator of the loss of a hypothesis, not only for the hypothesis class that was chosen for generating the importance-weighted sample, but also for any other different hypothesis class that might be used for training in the future. As a result an actively queried dataset, generated using a hypothesis class, $\cH$, via importance weighted strategies, can now be employed in the future with a different hypothesis class, $\cH'$, without the problem of sampling bias. However, if one were to not use importance weights, then the queried data is strongly tied to the hypothesis class, $\cH$, and might not be suitable for learning with $\cH'$. 

Before we dive into the details of SMD-AMA we shall need some notation, and terminology, that is relevant for the explanation of the our algorithm. Most of this exposition is standard and is taken from~\cite{juditsky2005recursive}. Let $E=\ell_1^M$, be the space of $\bbR^M$ equipped with $\ell_1$ norm. Let $E^{*}=\ell_{\infty}^M$ be the corresponding dual space, equipped with the $\ell_{\infty}$ norm.  
\begin{defn}
  Let $\Delta_M\subset E$, be the simplex, and let $V:\Delta_M\rightarrow \bbR$ be a convex function. For a given parameter $\beta$, the $\beta$ Fenchel dual of $V$ is  the convex function $V^{*}_\beta:E^*\rightarrow\bbR$, defined as
\begin{equation*}
  V^{*}_{\beta}(\xi)=\sup_{\theta\in\Delta_M}[-\la\xi,\theta\ra-\beta V(\theta)].
\end{equation*}
\end{defn}
Finally, as mentioned in Section~\ref{sec:agg_intro}, to define the SMD algorithm, we need a regularization function. Since our constraint set is a simplex, we shall use the entropy function defined as 
\begin{equation*}
  \cR(\theta)=
  \begin{cases}
    -\sum_{j=1}^M \theta_j\log(\theta_j)~\text{if~} \theta\in \Delta_M\\
    \infty~\text{otherwise}
  \end{cases}
\end{equation*}
as our regularization function.

\textbf{Design of SMD-AMA.} 
We now have all the ingredients of our algorithm in place. The algorithm proceeds in a streaming fashion, looking at one unlabeled data point at a time. Step 4 of SMD-AMA, calculates the probability of the point being labeled +1 by the current convex aggregate, and this calculation is used in Step 5 to calculate the probability of querying the label of $x_t$. Notice that the probability of querying a point, in round $t$, is always at least $\epsilon_t>0$. Value of $\epsilon_t$ is set in Step 3. Step 7 calculates the importance weighted gradient, which is used in Step 9 to calculate the new iterate $\theta_t$. By straightforward calculus, one can show that Step 9, leads to the following iteration
\begin{equation*}
  \theta_{t,j}\propto\exp(-\xi_{t,j}/\beta),
\end{equation*}
where $\xi_{t,j}$ is the $j^{\text{th}}$ component of the vector $\xi_t$.
In Step 4, we use properties of the loss function in order to estimate $p_t^{+}$. It is well known that standard loss functions such as exponential, squared loss, which are used in classification, are also proper losses for probability estimation~\citep{zhang2004statistical,buja2005loss,reid2009surrogate}. Hence, given the loss function, via standard formulae it is easy to estimate the conditional probability $\bbP[Y_t=1|X_t=1]$. For instance, if one were to use the squared loss $L(yz)=(1-yz)^2$, then our estimate for $p_t^{+}$ is given by the formula $\max\left(0,\min\left(\frac{1+z}{2},1\right)\right)$. In our case, the value of $z$, in round $t$, is given by $\la\theta_{t-1},b(x_t)\ra$, and hence, $p_t^{+}=\max\left(0,\min\left(\frac{1+\la\theta_{t-1},b(x_t)\ra}{2},1\right)\right)$.
\begin{algorithm}[h]
  \caption{\label{alg:smd_ama}{\color{blue}SMD-AMA} (Input: A margin based loss function $L$, Labeling Oracle $\cO$, Parameters $\mu\in[0,1)$, $\beta_0>0$)}
  \begin{algorithmic}
    \STATE 1. Initialize $\theta_0=[\frac{1}{M},\ldots,\frac{1}{M}]^T,\xi_0=[0,\ldots,0], t=1$
    \FOR{t=1,\ldots}
    \STATE 2. Receive $x_t$.
    \STATE 3. Set $\epsilon_t=t^{-\mu}$
    \STATE 4. Estimate $p_t^{+}=\bbP[Y_t=1|X=x_t,\theta_{t-1}]$
    \STATE 5. Calculate $p_t=4p_t^{+}(1-p_t^{+})(1-\epsilon_t)+\epsilon_t$.
    \STATE 6. Query the label of $x_t$ with probability $p_t$.
    \STATE 7. Set $g_t=\frac{Q_t}{p_t}L'(y_t\la\theta_{t-1},b(x_t)\ra)y_tb(x_t)$.
    \STATE 8. Set $\xi_t\leftarrow \xi_{t-1}+g_t$.
    \STATE 9. Update $\beta_t=\beta_0(t+1)^{\frac{1+\mu}{2}}$.
    \STATE 10. Calculate $\theta_t=-\nabla \cR^{*}_{\beta_t}(\xi_t)$.
    \STATE 11 $t\leftarrow t+1$.
    \ENDFOR
    \STATE 12. Return $\thetahatT=\frac{\sum_{t=1}^T \theta_t}{T}$
  \end{algorithmic}
\end{algorithm} 

%SMD-AMA performs gradient descent in the dual space, and at each round generates the primal variable via the $\beta$ conjugate of $\cR$. If $p_t$ is set to 1, then we recover the SMD-PMA algorithm of Juditsky et al, modulo the choice of the constant $\beta_0$. The primary difference between SMD-PMA and stochastic mirror descent is in the way the primal variables are generated. While in SMD-PMA $\beta_t$ changes with $t$, in stochastic mirror descent, $\beta_0=\beta_1=\ldots\beta_t\ldots$ is fixed to a constant that depends on the legth of the data stream~\footnote{In the standard implementation of stochastic mirror descent, $\beta$ parameter is not used. Instead a step size of $\frac{1}{\beta_0}$ is used during the gradient descent procedure. In contrast, in our discussions, we never spoke of a step size, rather we set our step size to be 1. These two procedures are essentially the same.}. By using a variable $\beta_t$, we are able to obtain excess risk gurantees without even knowing the length of the data stream.
A technical point is that, Step 9 in Algorithm~\ref{alg:smd_ama}, is performed irrespective of whether the point received in round $t$ is queried or not. This allows the evolution of $\beta_t$ to be deterministic, and hence allows us to use a certain lemma of Juditsky et al~\citeyearpar{juditsky2005recursive}, that is crucial to our excess risk analysis . As a result of this step, our current convex aggregate $\theta_t$ will be different from $\theta_{t-1}$, even if the point $x_t$ was not queried. 
\section{Excess Risk analysis}
\label{sec:agg_excess_risk}
\begin{thm}
\label{thm:excess_risk}
Let $\cB=\{b_1,\ldots,b_M\}$ be a collection of basis models, where for each $x\in\cX$, $b_j(x)\in\{-1,+1\}$. For any $x$ in $\cX$, let $b(x)=[b_1(x),\ldots,b_M(x)]^T$. Let, $R(\theta)\defeq \bbE L(y\la\theta,b(x)\ra)$. Then, for any $T\geq 1$, and any $\theta\in\Delta_M$, when SMD-AMA is run with the parameter $\beta_0=\sqrt{\frac{L_{\phi}^2}{2\log(M)\sqrt{2^{\mu+1}}(1+\mu)}}$, where $\mu\in [0,1)$, then the convex aggregation, $\thetahatT$, returned by the SMD-AMA algorithm after $T$ rounds, satisfies the following excess risk inequality
\begin{equation*}
  \bbE R(\thetahatT)\leq  R(\theta)+2\sqrt{T^{\mu-1}}\left(\sqrt{\frac{L_{\phi}^2\sqrt{2^{\mu-1}}\log(M)}{1+\mu}}\right).
\end{equation*}
\end{thm}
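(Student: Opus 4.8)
The plan is to follow the standard stochastic mirror descent (equivalently, dual averaging) template: convert the excess risk into an expected regret, bound that regret pathwise with the recursive lemma of Juditsky et al.~\citeyearpar{juditsky2005recursive}, and only then pass to expectations so that merely the \emph{second moment} of the importance-weighted subgradient is needed.

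First I would reduce the excess risk to a regret quantity. Since $R=f$ is convex, for each $t$ and any fixed comparator $\theta\in\Delta_M$ we have $R(\thetatminus)-R(\theta)\le\la\nabf(\thetatminus),\thetatminus-\theta\ra$. The query variable $Q_t$, together with Assumption~\ref{ass:indep}, makes $g_t$ an unbiased estimate of $\nabf(\thetatminus)$ given the history, so taking conditional expectations yields $\bbE[R(\thetatminus)]-R(\theta)\le\bbE\la g_t,\thetatminus-\theta\ra$. Summing over $t$, dividing by $T$, and applying Jensen's inequality to the averaged iterate $\thetahatT$ gives
\begin{equation*}
\bbE R(\thetahatT)-R(\theta)\le\frac{1}{T}\,\bbE\sum_{t=1}^{T}\la g_t,\thetatminus-\theta\ra .
\end{equation*}

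Next I would bound the random regret $\sum_t\la g_t,\thetatminus-\theta\ra$ pathwise. Because the entropic regularizer is $1$-strongly convex with respect to the $\ell_1$ norm on $\Delta_M$, and because Step~10 is exactly the dual-averaging step $\theta_t=-\nabla\cR^{*}_{\beta_t}(\xi_t)$ with cumulative gradient $\xi_t=\sum_{s\le t}g_s$ and a \emph{deterministic}, nondecreasing sequence $\beta_t$, the Juditsky et al. lemma applies and gives a bound of the form
\begin{equation*}
\sum_{t=1}^{T}\la g_t,\thetatminus-\theta\ra\le\beta_T\,\cR(\theta)+\frac12\sum_{t=1}^{T}\frac{\norm{g_t}_{\infty}^2}{\beta_{t-1}} .
\end{equation*}
Here $\cR(\theta)\le\log M$ uniformly over $\Delta_M$, and the dual of $\ell_1$ is $\ell_\infty$, which is why only $\norm{g_t}_\infty$ enters. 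The fact (stressed after the algorithm box) that Step~9 updates $\beta_t$ regardless of whether $x_t$ is queried is precisely what keeps $\beta_t$ deterministic, letting me invoke this lemma for every realization and defer expectations to the very end.

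It then remains to control $\bbE\norm{g_t}_\infty^2$. Since $b(x_t)\in\{-1,+1\}^M$ we have $\norm{y_tb(x_t)}_\infty=1$, and $\abs{L'(\cdot)}\le L_\phi$ on the relevant range (as $\la\thetatminus,b(x_t)\ra\in[-1,1]$), so $\norm{g_t}_\infty^2\le\frac{Q_t}{p_t^2}L_\phi^2$ using $Q_t^2=Q_t$. Taking expectations with $\bbE[Q_t\mid\cdot]=p_t$ and the crucial lower bound $p_t\ge\epsilon_t=t^{-\mu}$ from Step~5 gives $\bbE\norm{g_t}_\infty^2\le L_\phi^2/p_t\le L_\phi^2 t^{\mu}$. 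Substituting $\beta_{t-1}=\beta_0 t^{(1+\mu)/2}$ and $\beta_T=\beta_0(T+1)^{(1+\mu)/2}$ and using $(T+1)^{(1+\mu)/2}\le 2^{(1+\mu)/2}T^{(1+\mu)/2}$ and $\sum_{t=1}^T t^{(\mu-1)/2}\le\frac{2}{1+\mu}T^{(1+\mu)/2}$, both terms collapse to the common rate $T^{(\mu-1)/2}$:
\begin{equation*}
\bbE R(\thetahatT)-R(\theta)\le T^{(\mu-1)/2}\left(\beta_0\sqrt{2^{\mu+1}}\log M+\frac{L_\phi^2}{2\beta_0(1+\mu)}\right).
\end{equation*}
Minimizing the bracket is a one-dimensional $a\beta_0+b/\beta_0$ problem whose optimizer is the stated $\beta_0=\sqrt{L_\phi^2/(2\log M\,\sqrt{2^{\mu+1}}\,(1+\mu))}$ and whose optimal value $2\sqrt{ab}$ equals the claimed constant $2\sqrt{L_\phi^2\sqrt{2^{\mu-1}}\log M/(1+\mu)}$.

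I expect the main obstacle to be the second step: obtaining a regret bound valid for the \emph{random, importance-weighted} subgradients, whose magnitude can be as large as $t^{\mu}$ in the worst case. The resolution is that keeping $\beta_t$ deterministic makes the Juditsky lemma hold pathwise, so expectations are taken only afterward; at that point the benign $t^{\mu}$ growth of the \emph{second moment} $\bbE\norm{g_t}_\infty^2$ is exactly counterbalanced by the $t^{(1+\mu)/2}$ growth of $\beta_t$, and verifying that this balance holds with the matching constants (the powers of $2$ and the $1/(1+\mu)$ factors) is the only delicate bookkeeping.
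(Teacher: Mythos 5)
Your proposal is correct and follows essentially the same route as the paper: Jensen plus convexity to reduce excess risk to regret, the Juditsky et al.\ recursion with the deterministic schedule $\beta_t=\beta_0(t+1)^{(1+\mu)/2}$ applied pathwise, the bound $\bbE\norm{g_t}_\infty^2\le L_\phi^2/p_t\le L_\phi^2 t^{\mu}$ from importance weighting and the floor $p_t\ge\epsilon_t$, and finally the $a\beta_0+b/\beta_0$ optimization. The only (cosmetic) difference is that you take conditional expectations to replace $\nabla R(\thetatminus)$ by $g_t$ before invoking the regret lemma, whereas the paper applies the lemma to $\nabla R(\thetatminus)$ and discards the martingale term $\Delta_t(\thetatminus)$ in expectation; the two are equivalent, and your constant bookkeeping matches the stated $\beta_0$ and final bound.
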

\begin{comment}
Proof of Theorem~\ref{thm:excess_risk} proceeds via the following lemma. 
\begin{lemma}
  \label{lem:agg_div}
  For any $\theta\in\Delta_M$, and for any $T\geq 1$, we have
  \begin{equation}
    \begin{split}
    \sum_{t=1}^T \la \theta_{t-1}-\theta,\nabla\cR(\theta_{t-1})\leq &\beta_T\cR(\theta)
   +\sum_{t=1}^T \frac{||g_t||_{\infty}^2}{2\beta_{t-1}}\\
   &-\sum_{t=1}^T \la \theta_{t-1}-\theta_t,\Delta_{t}(\theta_{t-1})\ra.
    \nonumber
  \end{split}
  \end{equation}
\end{lemma}
The above lemma was first established in~\cite{juditsky2005recursive} (See Proposition 2 in~\cite{juditsky2005recursive})
\end{comment}
\subsection{Proof of Theorem~\ref{thm:excess_risk}} 
Let $R(\theta)\defeq\bbE L(y\la\theta,b(x)\ra)$.
By definition, 
\begin{equation*}
  \thetahatT=\frac{\sum_{t=1}^T\thetatminus}{T}.
\end{equation*}
Since $L$ is a convex function, hence by Jensen's inequality
\begin{align}
  \label{eqn:jensen}
  \bbE R(\thetahatT)-\bbE R(\theta)&\leq \frac{\sum_{t=1}^T \bbE R(\thetatminus)-\bbE R(\theta)}{T}
\end{align}
 Since $R(\theta)$ is a convex function of $\theta$, hence we can use the subgradient to build an under-approximation to get
\begin{equation}
  \label{eqn:subgrad}
  R(\thetatminus)-R(\theta)\leq \la\nabR(\thetatminus),-\theta+\thetatminus\ra.
\end{equation} 
Putting together Equations~\ref{eqn:jensen},~\ref{eqn:subgrad} we then get 
\begin{align*}
  \bbE R(\thetahatT)-\bbE R(\theta)&\leq \frac{\sum_{t=1}^T\bbE\la\nabR(\thetatminus),\thetatminus-\theta\ra}{T}\\
  &\leq \frac{1}{T}\Bigl[\beta_T\log(M)+\bbE \sum_{t=1}^T \frac{||g_t||_{\infty}^2}{2\beta_{t-1}} -\bbE\sum_{t=1}^T \la\theta_{t-1}-\theta,
  g_t-\nabla R(\theta_{t-1})\ra\Bigr].
\end{align*}
In the above set of inequalities, to obtain the second inequality from the first we used Lemma 1 (presented in the supplementary material). Since our data stream is drawn i.i.d. from the input distribution, hence $\bbE g_t-\nabla R(\thetatminus)=0$. This gets us
\begin{align*}
  \bbE R(\thetahatT)-\bbE R(\theta)&\leq \frac{\bbE\la\nabR(\thetatminus),\thetatminus-\theta\ra}{T}\\
  &\leqa \frac{1}{T}\left[\beta_T\log(M)+\sum_{t=1}^T \bbE\frac{L_{\phi}^2}{2p_t\beta_{t-1}}\right]\\
  &\leqb\frac{\beta_T \log(M)}{T}+\frac{L_{\phi}^2}{2T\beta_0}\sum_{t=1}^T \sqrt{t^{\mu-1}}\\
 % &\leqc \frac{\beta_0(T+1)^{\frac{1+\mu}{2}}\log(M)}{T}+\frac{L_{\phi}^2\sqrt{T^{\mu-1}}}{2\beta_0(1+\mu)}\\
%&\leqd 2^{\frac{1+\mu}{2}}\beta_0T^{\frac{\mu-1}{2}}\log(M)+\frac{L_{\phi}^2\sqrt{T^{\mu-1}}}{2\beta_0(1+\mu)}\\
&\leq 2\sqrt{T^{\mu-1}}\sqrt{\frac{L_{\phi}^2\sqrt{2^{\mu-1}}\log(M)}{1+\mu}}.
\end{align*}
Inequality (a) was obtained by using the fact that $\bbE[\frac{Q_t}{p_t}]=1$, and by approximating $|L'(y_i\la \theta,b(x_i)\ra)|\leq L_{\phi}$. Inequality (b) was obtained by substituting for $\beta_{t-1}$ the expression $\beta_0 \sqrt{t^{1+\mu}}$, and upper bounding $\frac{1}{p_t}$ with $\frac{1}{\epsilon_t}$. Replacing $\beta_T$ in terms of $\beta_0,T$, and using the expression for $\beta_0$ as mentioned in the statement of the theorem, and over approximating $T+1$ by $2T$ we get the desired result. %To obtain inequality (c) from (b) we used the definition of $\beta_T$, and upper bounded the sum by an integral. To obtain (d) from (c) we upper bounded $T+1$ by $2T$. Finally using the value of $\beta_0$, stated in the statement of the theorem, we get the desired result.
\section{Related Work}
\label{sec:agg_related}
~\citet{madani2004active} considered the problem of active model aggregation, where given many models one has to choose a single best model from the collection. They model this problem as the coins problem, where a player is provided with a certain number of flips, and is allowed to flip coins until the budget runs out, after which the player has to report the coin which has the highest probability of turning up heads. A reinforcement learning approach, to the same problem, was taken by~\citet{kapoor2005reinforcement}. In this paper, we are dealing with a more complicated problem of choosing the best model from the convex hull of a given set of models.~\citet{mamitsuka1998query} combine the ideas of Query-by-committee, and boosting to come up with an active learning algorithm, where the current weighted majority is used to decide which point to query next. In order to obtain a weighted-majority of hypothesis, the authors suggest using ensemble techniques such as boosting and bagging.~\citet{trapeznikovactive} introduced the ActBoost algorithm, which is an active learning algorithm in the boosting framework.  Particularly, ActBoost works under the weak learning assumption of~\citet{freund1996experiments}, which assumes that there is a hypothesis with zero error rate, in the convex hull of the base classifiers. Under this assumption, the authors suggest a version space based algorithm, that maintains all the possible convex combinations of the base hypothesis that are consistent with the current data, and queries the labels of the points, on which two hypothesis in the current convex hull, disagree. By design, the ActBoost algorithm is very brittle. In contrast, we do not make any weak learning assumptions, and hence avoid the problems that ActBoost might face when weak learning assumption is not satisfied. Active learning algorithms, in the boosting framework, have also been suggested by~\citet{iyengar2000active}, but they do not admit any guarantees, and are somewhat heuristic.
\section{Experimental Results}
\label{sec:agg_exp}
We implemented SMD-AMA, along with SMD-PMA, and the QBB algorithm~\cite{mamitsuka1998query}. As mentioned before, QBB is an ensemble based  active learning algorithm, which builds a committee via the AdaBoost algorithm. QBB works in an iterative fashion. In round $t$, QBB runs the AdaBoost algorithm, on the currently labeled dataset $S_t$, with the collection of models $\cB$, to get a boosted model $h_t$. A boosted model is of the form $h_t(x)=\sum_{j=1}^M \alpha_{j,t} b_j(x)$, where $\alpha_{j,t}>0$, for all $j=1,\ldots,M$. To choose the next point to be queried, QBB generates a random sample of $R$ points from the current set of unqueried points. Suppose this random sample is $C_t$. To choose the next point to be queried, we look for that point in $C_t$, whose margin w.r.t. $h_t$ is the smallest. We then query for the label of this point. This process is repeated until some condition is satisfied (typically until a budget is exhausted).

 \subsection{Experimental Setup.}
 We used decision stumps along different dimensions, and with different thresholds, to form our set of basis models $\cB$. A decision stump is a weak classifier that is characterized by a dimension $j$, and a threshold $\theta$, and classifies a point $x\in \bbR^d$ as $\sgn(x_{j}-\theta)$, where $x_j$ is the $j^{\text{th}}$ dimension of $x$. For all our experiments, we used 80 decision stumps along each dimension~\footnote{Using more decision stumps yielded insignificant improvement in test error, but increased computational complexity by a large amount}. We make our set, $\cB$, symmetric by adding $-b$ to $\cB$, if $b\in\cB$. Unless otherwise mentioned, $\mu$ is set to 0.30 for all of our SMD-AMA experiments. We report results on some standard UCI datasets~\footnote{Our datasets are Abalone, Statlog, MNIST (3 Vs 5), Whitewine, Magic, Redwine.}. The loss function used for SMD-AMA, and SMD-PMA is the squared loss. 

\subsection{Comparison with Passive Learning} Our first set of experiments compare SMD-PMA to SMD-AMA. We run both SMD-AMA and SMD-PMA on our datasets, and  use the hypothesis outputted by these algorithms, at the end of each round, to classify on a test dataset. In Figure~\ref{fig:pma_ama_smd}, we plot the test error rate of both the algorithms with the number of points seen in the stream. Note that while SMD-PMA gets to see the label of each and every point, SMD-AMA gets to see only those labels which it queries. Alternatively, one could plot the test error of different algorithms w.r.t. the number of labels seen. However, as mentioned in Section~\ref{sec:agg_alg}, our hypothesis can change between two consecutive queries, and hence it is easier to plot the test error with the number of points seen, rather than number of labels seen. We used the squared loss function for training SMD-AMA, and SMD-PMA algorithms. Finally, since SMD-AMA is a randomized algorithm, we report results averaged over 10 iterations. 
%%%%%%%%%%%%%%%%%%%%%%%%%%%%%%%%%%%%%%%%%%%%%%
%%%%%%%%%%%%%%%%%%%%%%%%%%%%%%%%%%%%%%%%%%%%%%%%%%%%%%%%%%%%%%%%%%%%%%%%%%%%%%%%%%%%%%%%%%%%%%%%%%%%% PLOT FOR TEST ERR VS NUM POINTS SEEN%%%%%%%%%%%%%%%%%%%%%%%%%%%%%%%%%%%%%%%%%%%%%%%%%%%%%%%%%%%%%%%%%%%%%%%%%%%%%%%%%%%%%%%%%%%%%%%%%%%%%%
\begin{figure*}[t]
  \centering
  \subfigure[Abalone]{\includegraphics[scale=0.20]{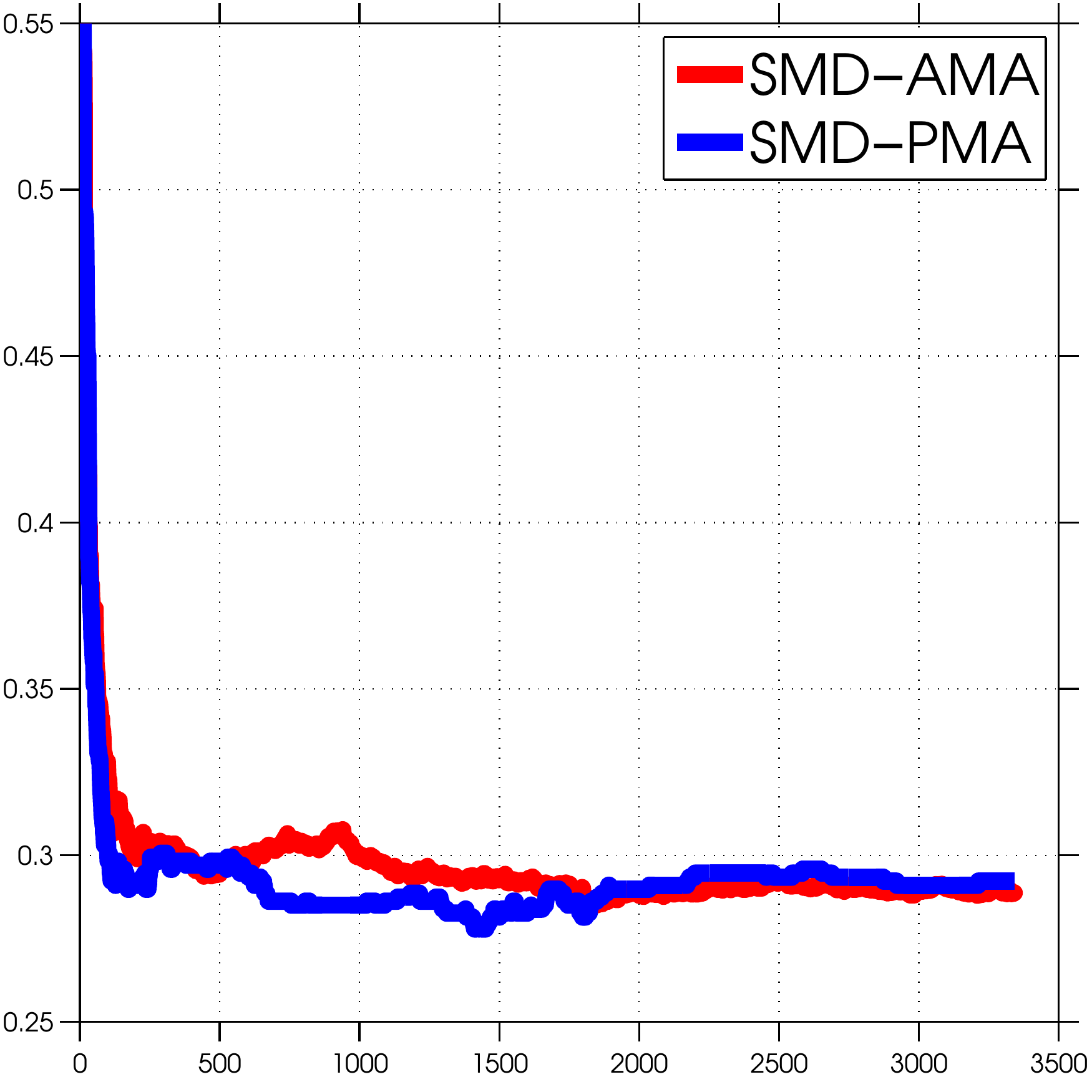}}
  \subfigure[Statlog]{\includegraphics[scale=0.20]{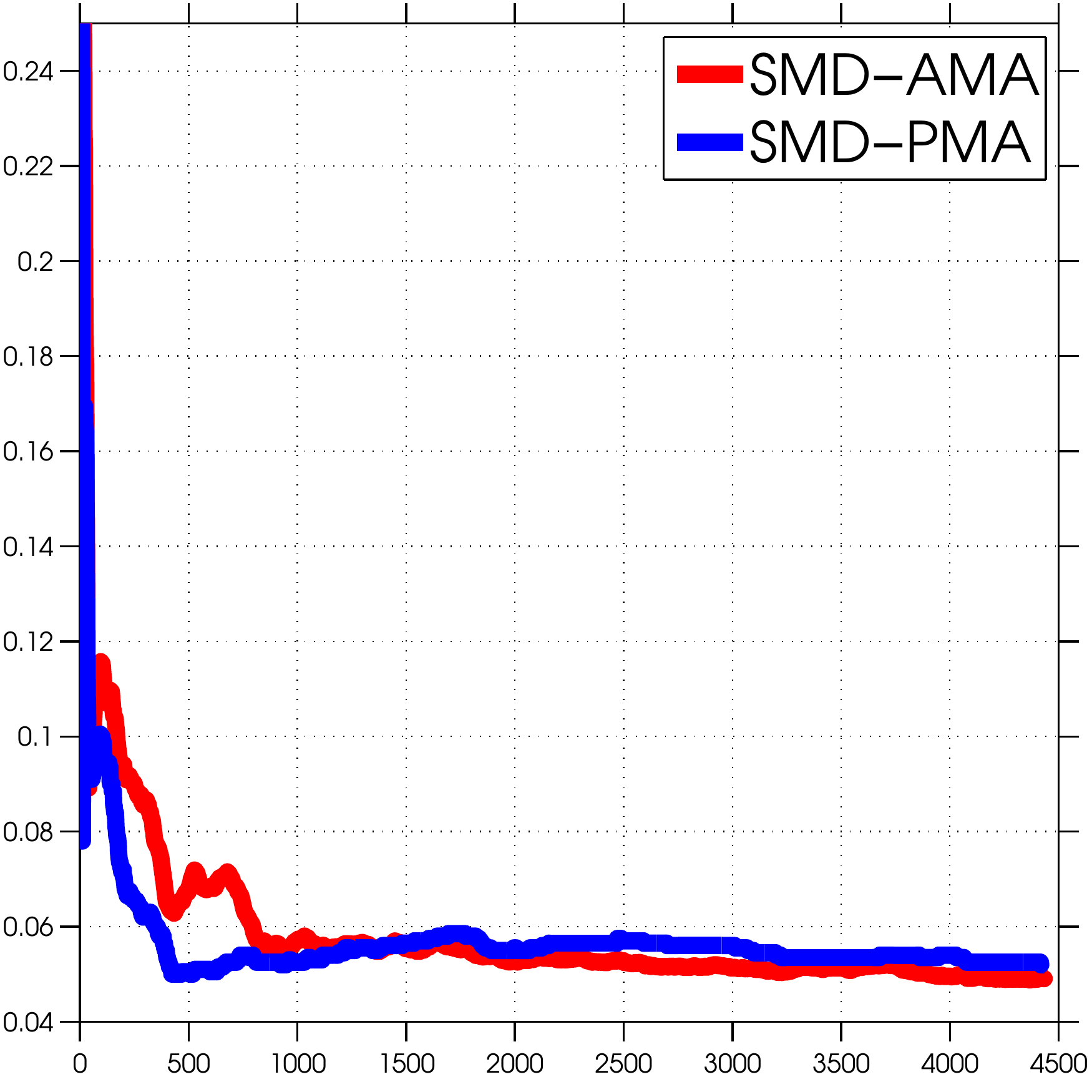}}
 \subfigure[MNIST (3 Vs 5)]{\includegraphics[scale=0.20]{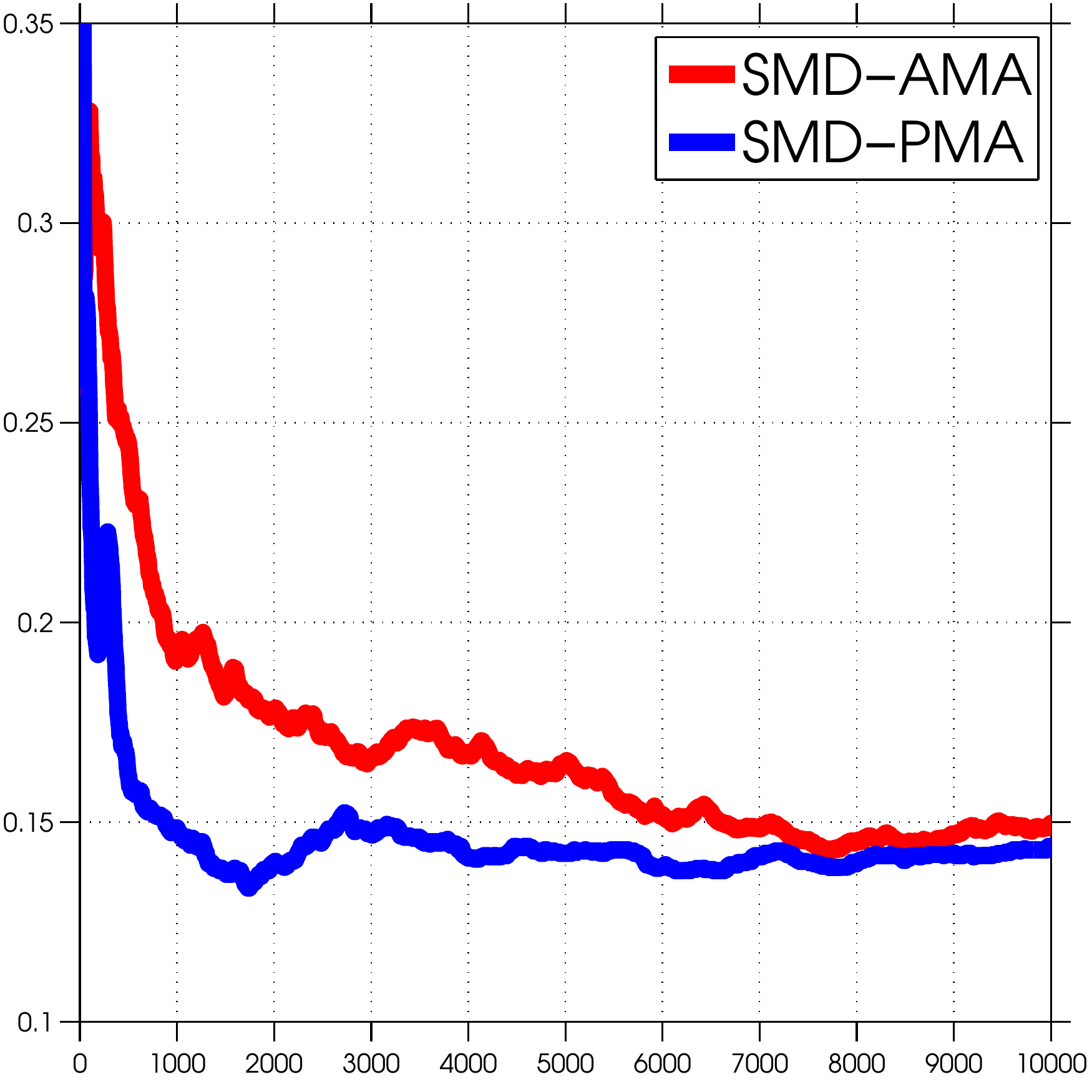}}
 \subfigure[Whitewine]{\includegraphics[scale=0.20]{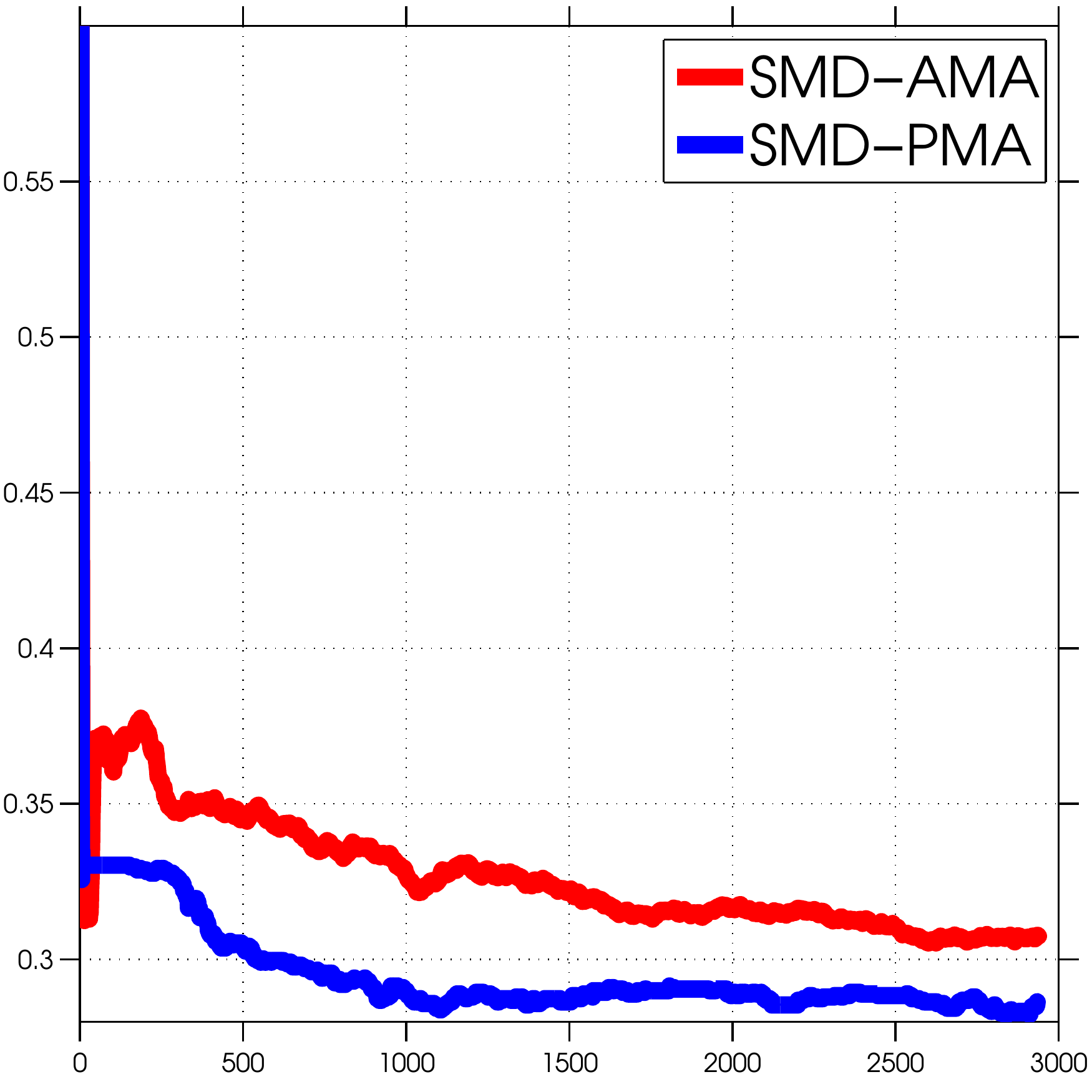}}
\subfigure[Magic]{\includegraphics[scale=0.20]{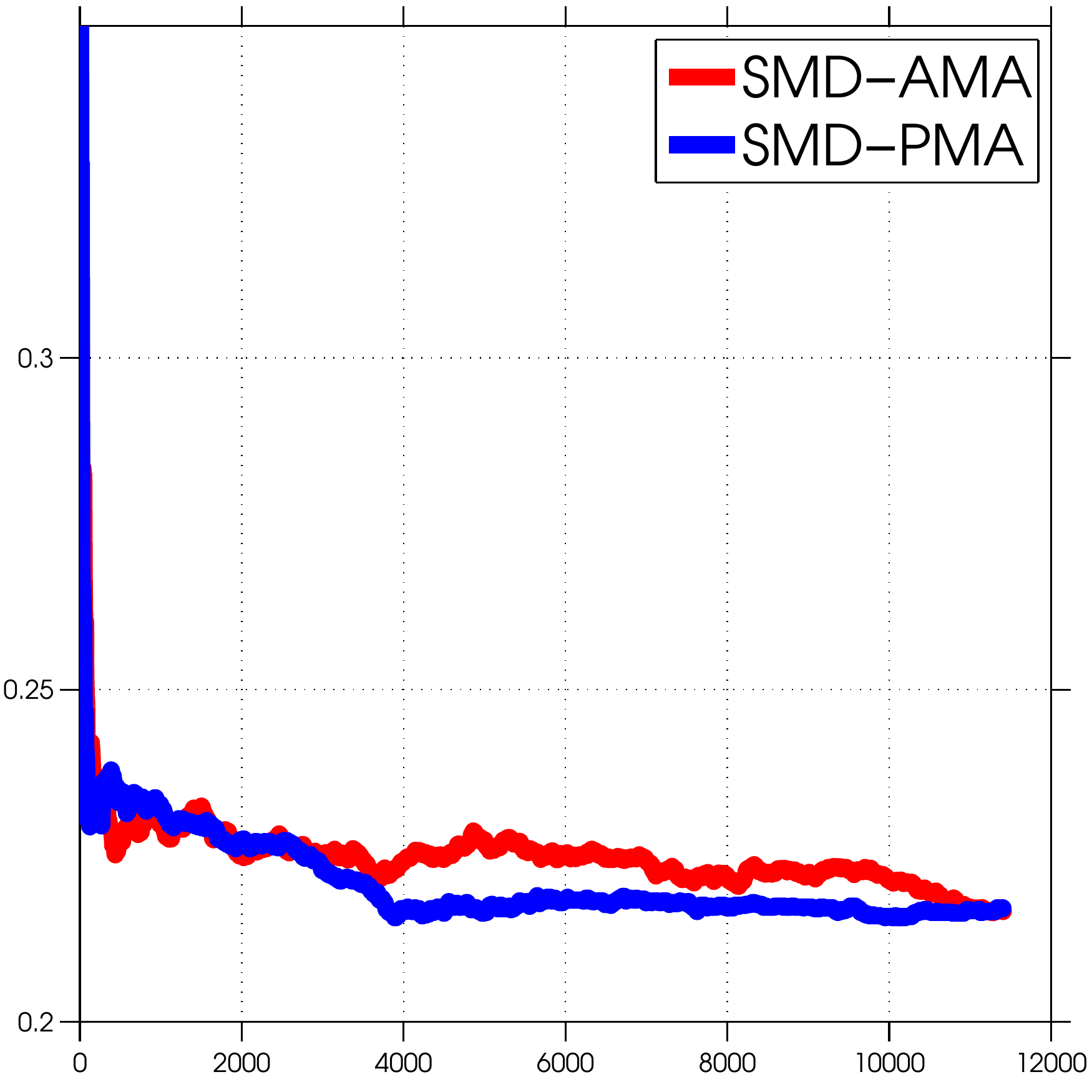}}
\subfigure[Redwine]{\includegraphics[scale=0.20]{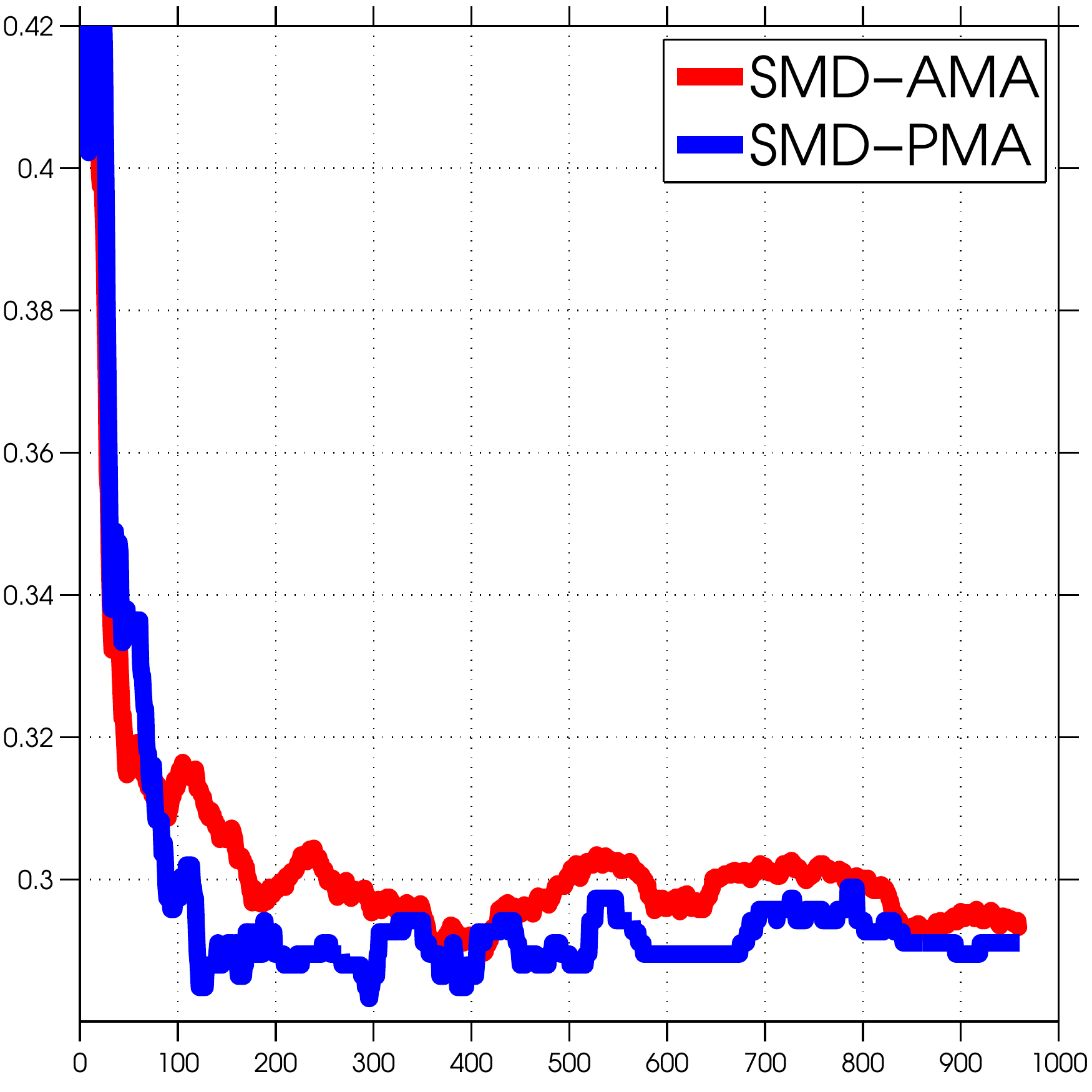}}
\caption{Comparison of the test error between SMD-AMA and SMD-PMA with the number of points seen.\label{fig:pma_ama_smd}}
\end{figure*}
%%%%%%%%%%%%%%%%%%%%%%%
\begin{table}[h]
\centering
\begin{tabular}{|l|l|l|l|l|}
\hline
Dataset&Active & Passive& Queries&Fraction\\
\hline
Abalone&0.2889&0.2922&440.2&0.1317\\
\hline
Statlog&0.0491&0.0520&2984&0.6728\\
\hline
MNIST&0.1496&0.1442&931.6&0.0932\\
\hline
Whitewine&0.3075&0.2864&406.8&0.1351\\
\hline
Magic&0.2166&0.2171&2450&0.2146\\
\hline
Redwine&0.2933&0.2911&540&0.5625\\
\hline
\end{tabular}
\caption{\label{tab:pma_ama_smd_err}Comparison of the test error between SMD-AMA (second column in the table) and SMD-PMA (third column in the table), and number of queries made by SMD-AMA on different datasets. The penultimate column reports the number of queries made by the algorithm, and the last column represents the fraction of the training dataset whose labels were queried. All results are for the hypothesis returned by the algorithms at the end of the stream.}
\end{table}
%%%%%%%%%%%%%%%%%%%%%%%%%%%%%%%%%%%%%%%%%%%%%%%%%%%%%%%%%%%%%%%%%%%%%%%%%%%%%%%%%%%%%%%%%%%%%%%%%%%%%%%%%%%%%%
\begin{table}
\centering
\begin{tabular}{|l|l|l|l|l|l|l|}
\hline
Dataset&SMD-AMA & SMD-PMA\\
\hline
Abalone&0.7992&0.7611\\
\hline
Statlog&0.3242&0.3396\\
\hline
MNIST&0.5517&0.5381\\
\hline
Whitewine&0.7894&0.7361\\
\hline
Magic&0.6315&0.6305\\
\hline
Redwine&0.7455&0.7660\\
\hline
\end{tabular}
\caption{\label{tab:pma_ama_smd_loss}Comparison of the test loss between SMD-AMA and SMD-PMA for the hypothesis returned by the algorithms at the end of the stream.}
\end{table}
From Table~\ref{tab:pma_ama_smd_err}, it is clear that for all datasets but Whitewine, both SMD-AMA and SMD-PMA attain almost the same error rate, after finishing a single pass through the stream. Figure~\ref{fig:pma_ama_smd} shows how the test error changes for SMD-PMA and SMD-AMA with the number of points seen in the stream. While in the case of Abalone, and Statlog, SMD-AMA quickly catches up with SMD-PMA (and in fact slightly surpasses SMD-PMA), in the case of MNIST, the difference between SMD-AMA and SMD-PMA closes only after having seen about 80\% of the stream. In the case of Whitewine, SMD-PMA is uniformly better than our active learning algorithm, SMD-AMA. The difference in error rates, between SMD-PMA and SMD-AMA, at the end of the stream is about 1.32\%. In the case of Magic, Redwine datasets, the difference in performance of SMD-AMA and  SMD-PMA is negligible. 

The number of queries made in Abalone, MNIST, and Whitewine is less than 14\% of the length of the stream, which implies that we do as well as passive learning, for Abalone, and MNIST, at the expense of far fewer labels. In the case of Statlog, and Redwine datasets the number of queries made is comparatively larger, about 67.28\%, and 56.25\% of the size of the dataset respectively. On Magic the fraction of queries made is less than 25\% of the number of training points in the dataset.

In Table~\ref{tab:pma_ama_smd_loss}, we report the loss on the test data, of SMD-AMA, SMD-PMA at the end of the data stream. Figure~\ref{fig:loss_curve} reports the test loss of both the algorithms with the number of samples seen in the stream. Note that while test error is always between 0 and 1, the test loss can be larger than 1. In fact for the convex aggregation model that we consider in this chapter, and with the squared loss, the maximum loss can be as large as $\max_{z\in [-1,1]}(1-z)^2=4$. The purpose of these experiments was to examine how the difference between the test loss suffered by SMD-AMA, and SMD-PMA, changes with the number of points seen in the stream. In the case of Statlog, and MNIST the difference in losses is generally smaller than in the case of Abalone and Whitewine, and on Magic, Redwine datasets SMD-AMA generally has suffers slightly smaller loss than SMD-PMA. However, since the scale for test loss is larger than 1, these results seem to imply that both SMD-AMA and SMD-PMA have comparable rates of decay for the test loss, with the number of unlabeled examples seen. Encouraged by these results, we believe it is possible to derive sharper excess risk bounds for SMD-AMA.
%%%%%%%%%%%%%%%%%%%%%%%%%%%%%%%%%%%%%%%%%%%%%%%%%%%%%%%%%%%%%%%%%%%%%%%%%%%%%%%%%%%%%%%%%%%%%%%%%%%PLOT FOR TEST LOSS VS NUM POINTS SEEN%%%%%%%%%%%%%%%%%%%%%%%%%%%%%%%%%%%%%%%%%%%%%%%%%%%%%%%%%%
\begin{figure*}
  \centering \subfigure[Statlog]{\includegraphics[scale=0.20]{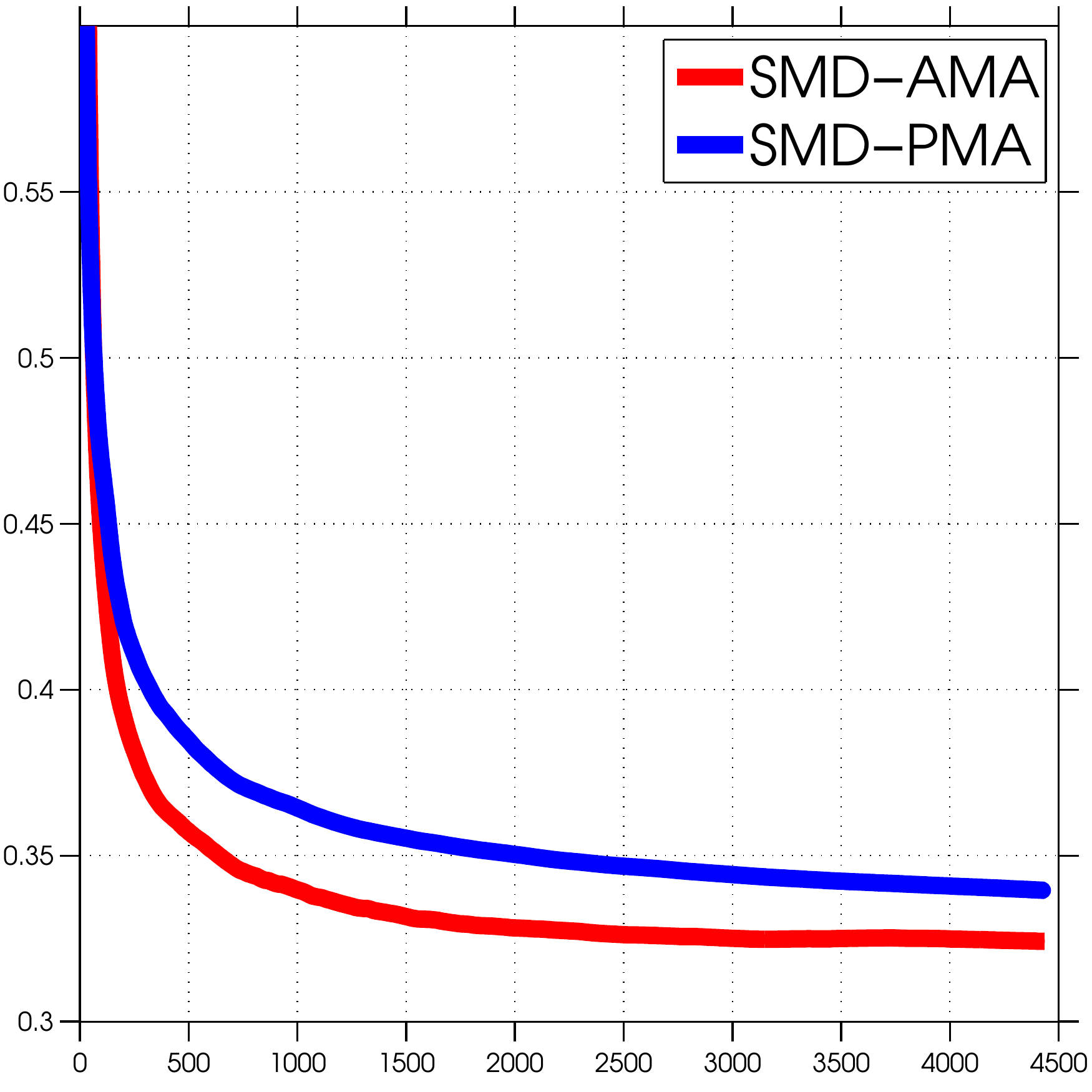}}
\subfigure[Magic]{\includegraphics[scale=0.20]{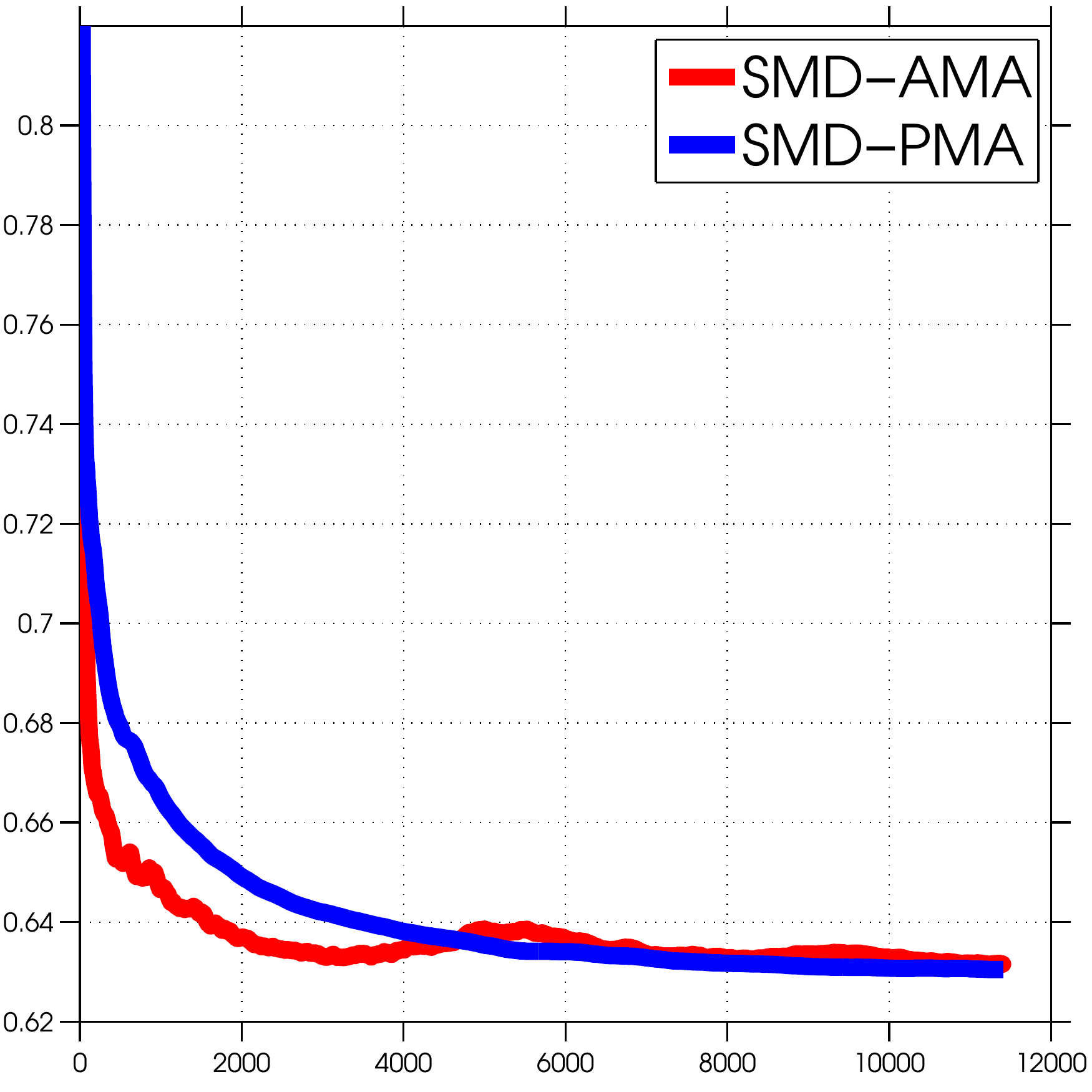}}
\subfigure[Redwine]{\includegraphics[scale=0.20]{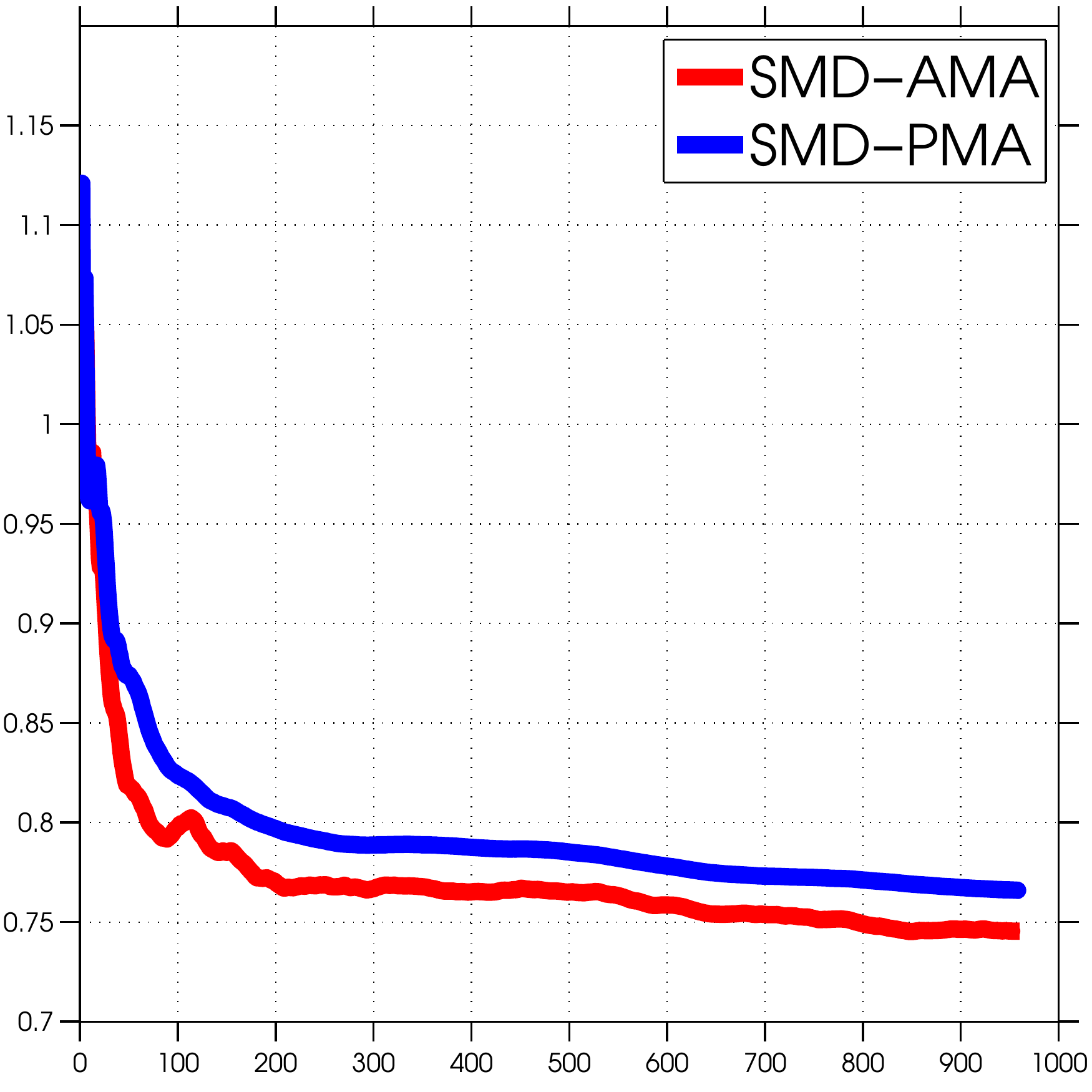}}
\subfigure[Abalone]{\includegraphics[scale=0.20]{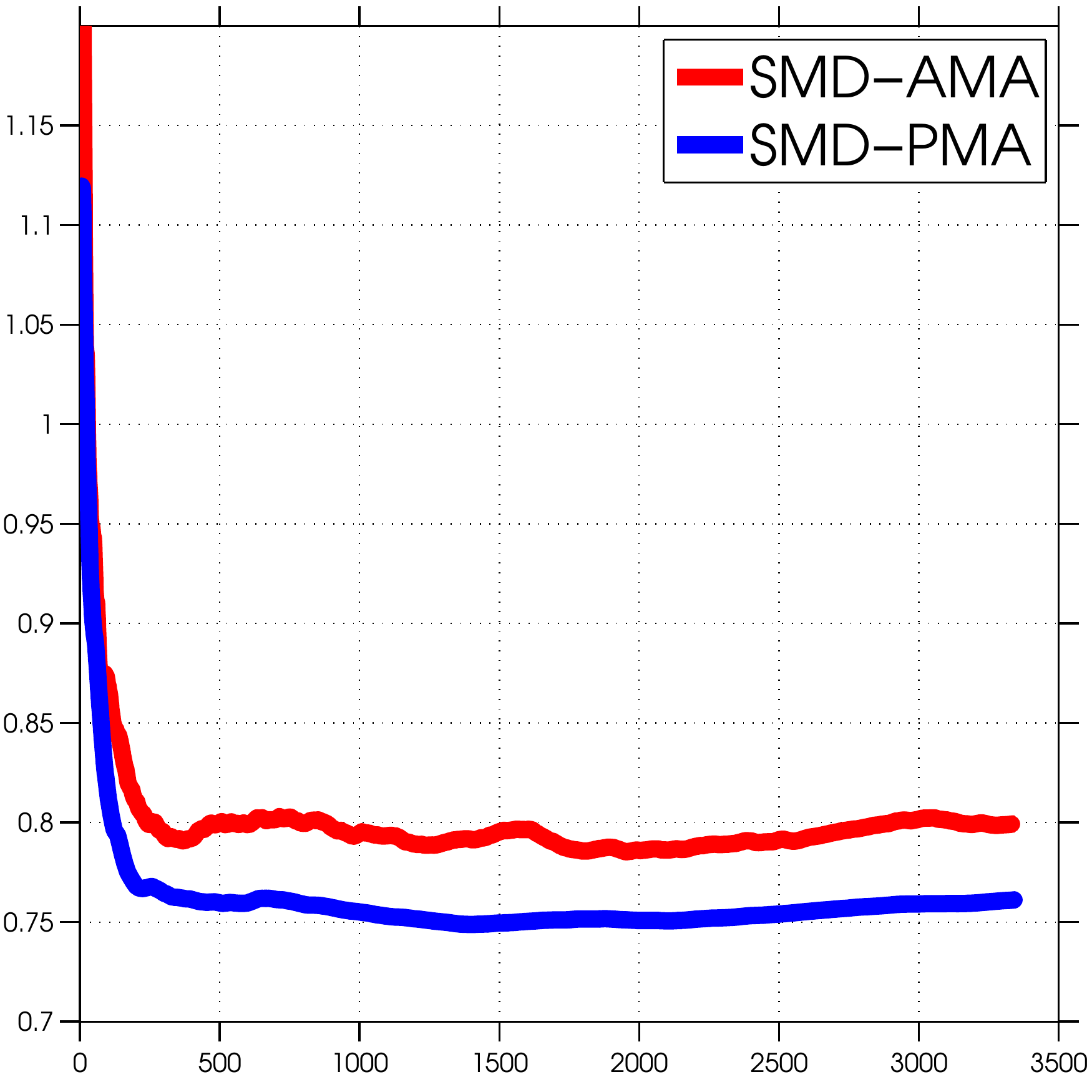}} 
\subfigure[MNIST]{\includegraphics[scale=0.20]{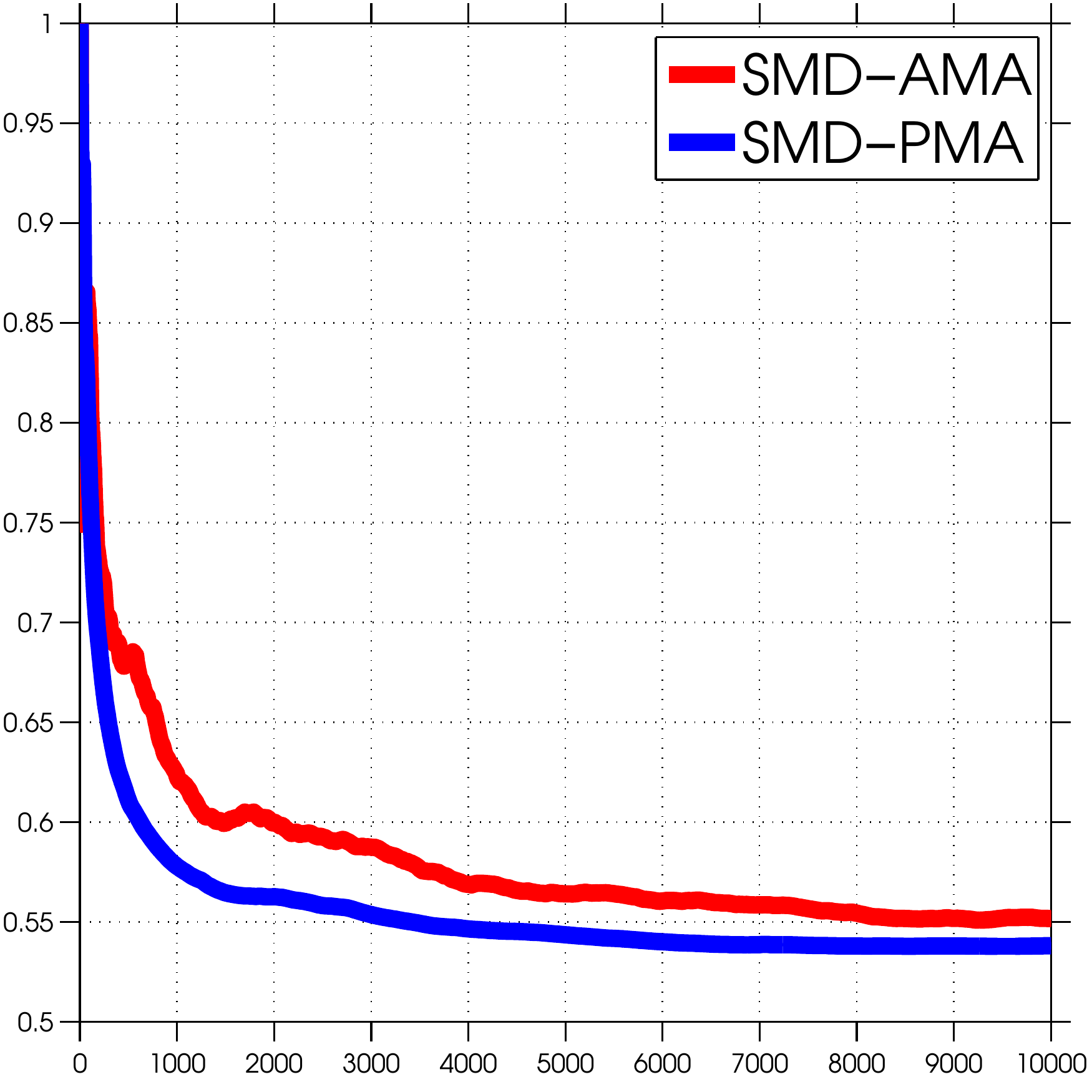}} \subfigure[Whitewine]{\includegraphics[scale=0.20]{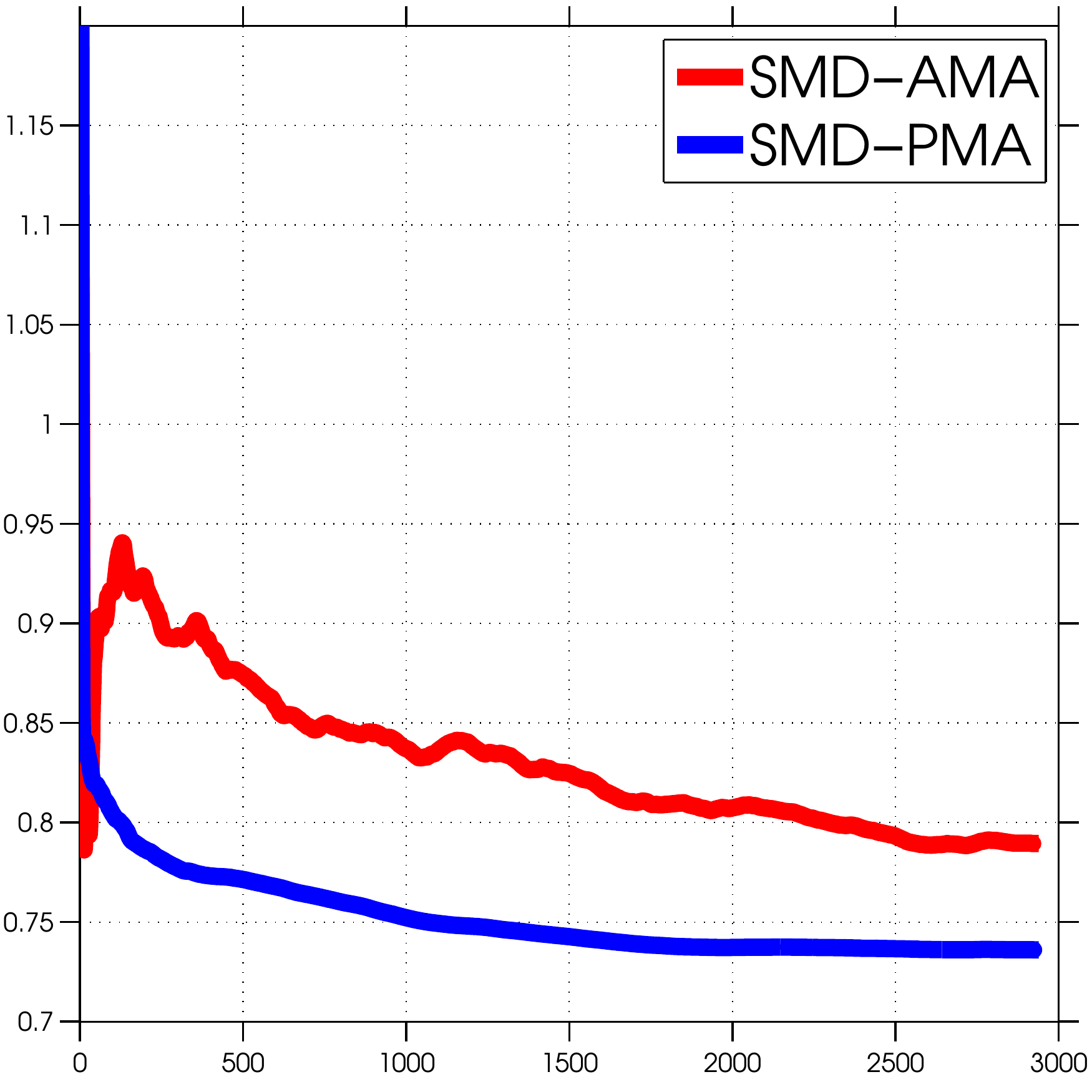}}
\caption{\label{fig:loss_curve}Comparison of the test loss of SMD-AMA and SMD-PMA with the number of points seen.}
\end{figure*} 
%%%%%%%%%%%%%%%%%%%%%%%%%%%%%%%%%%%%%%%%%%%%%%%%%%%%%%%%%%%%%%%%%%%%%%%%%%%%%%%%%%%%%%%%%%%%%%%%%%%% PLOTS FOR NUM QUERIES VS NUM-POINTS-SEEN%%%%%%%%%%%%%%%%%%%%%%%%%%%%%%%%%%%%%%%%%%%%%%%%%%%%%%%%%%%%%%%%%%%%%%%%%%%%%%%%%%%%%%%%%%%%%%%%%%%%%%%%%%%%%%%%%%%%%%%%%%%%%%%%%
\begin{figure*}[hbpt]
  \centering \subfigure[Abalone]{\includegraphics[scale=0.15]{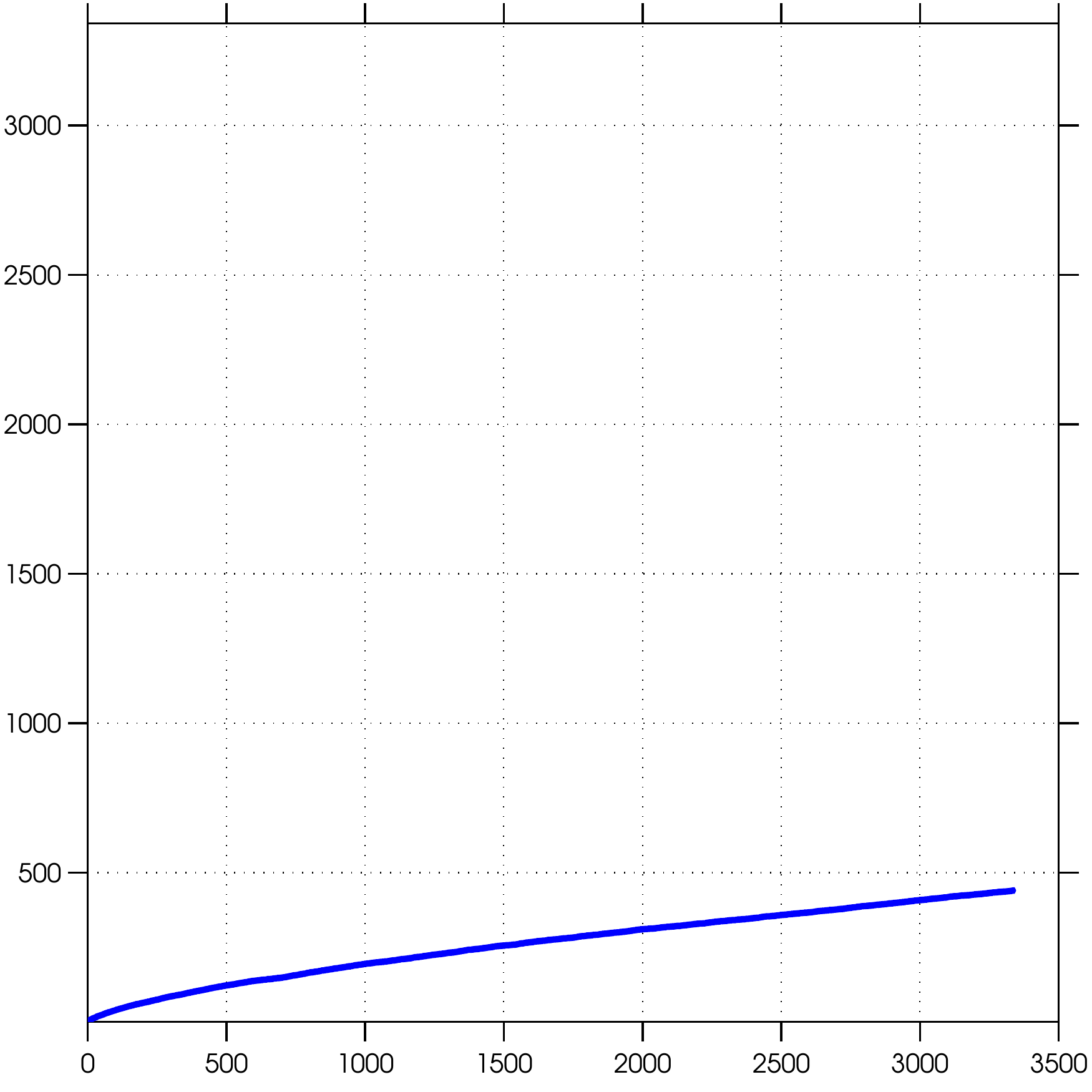}} \subfigure[Statlog]{\includegraphics[scale=0.15]{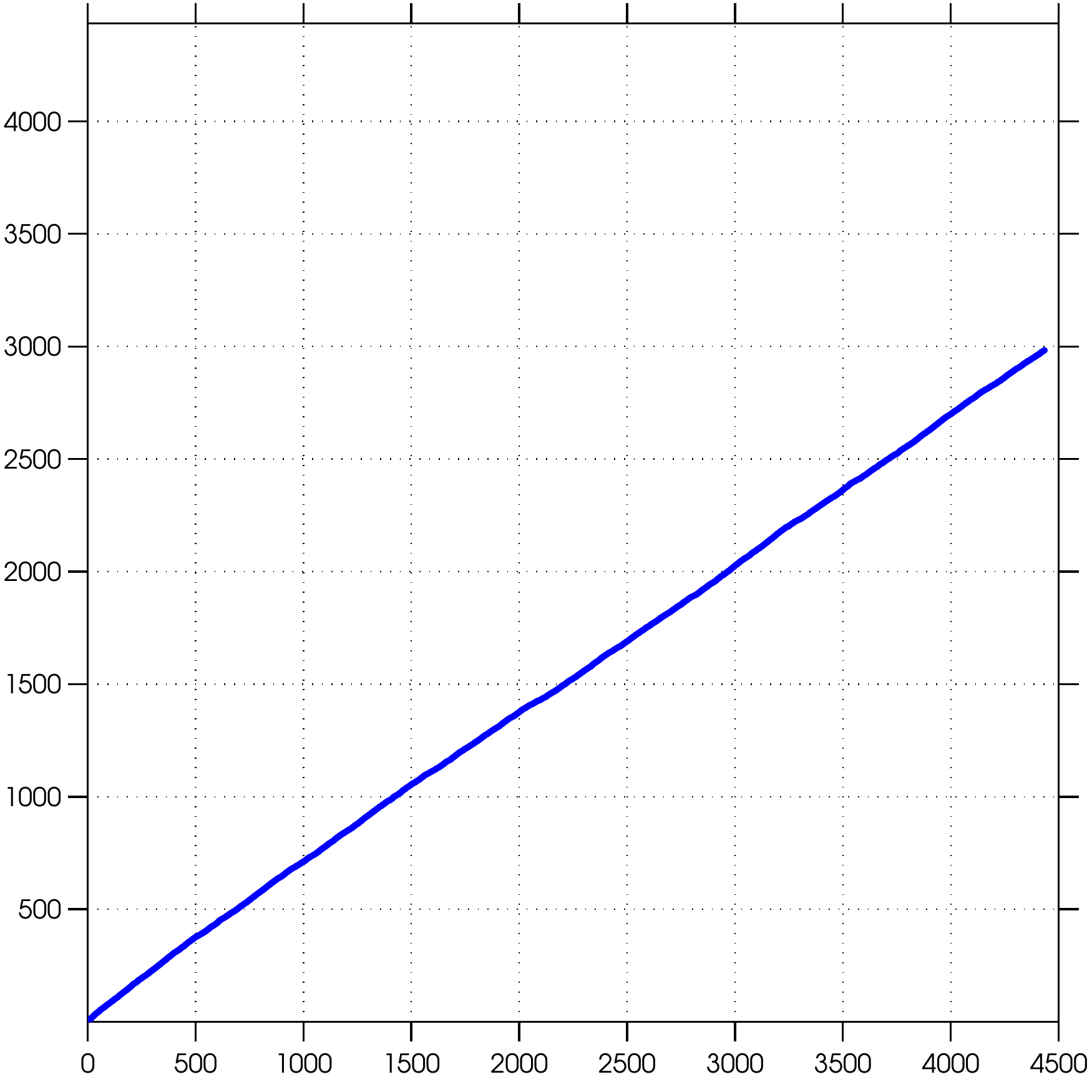}} \subfigure[MNIST]{\includegraphics[scale=0.15]{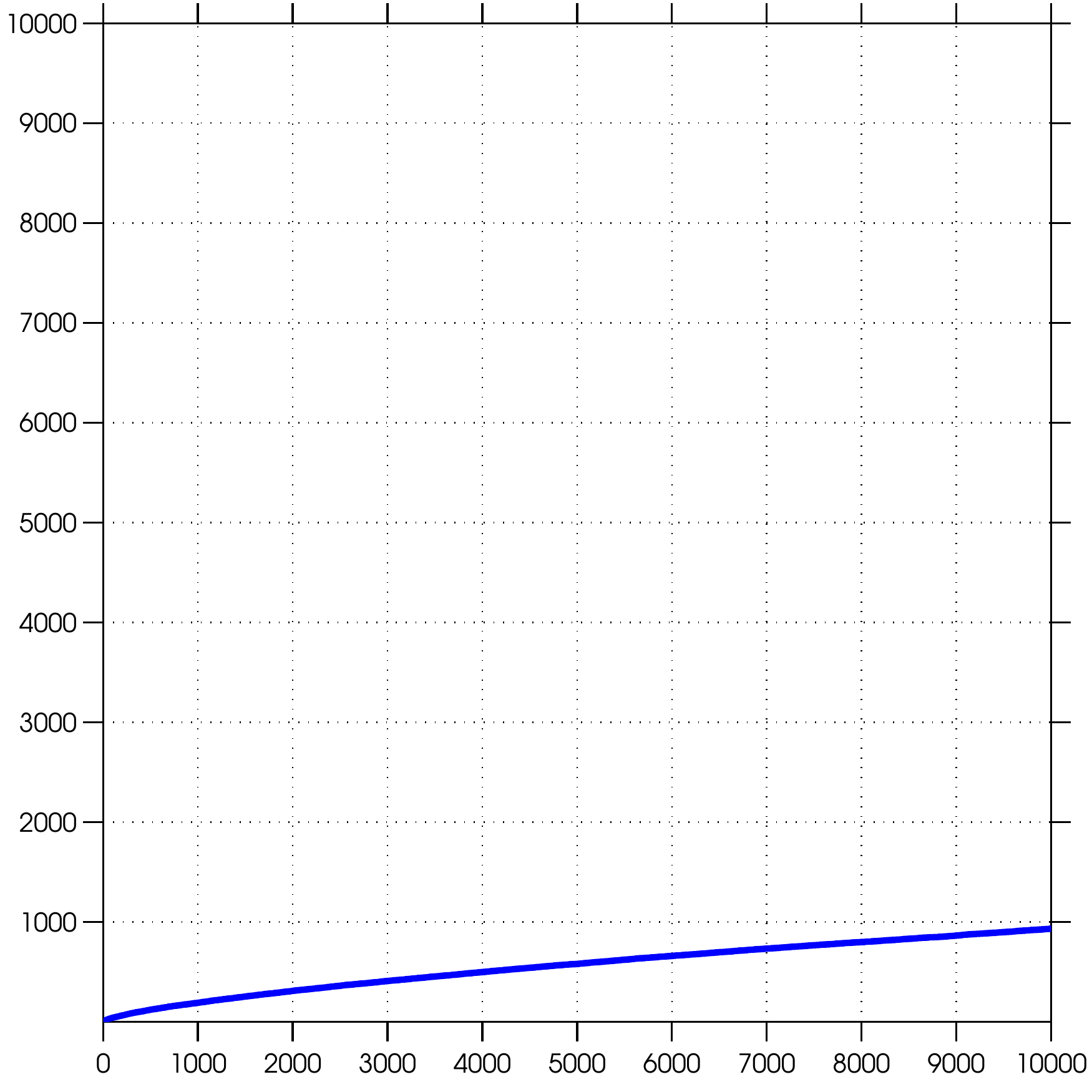}} \subfigure[Whitewine]{\includegraphics[scale=0.15]{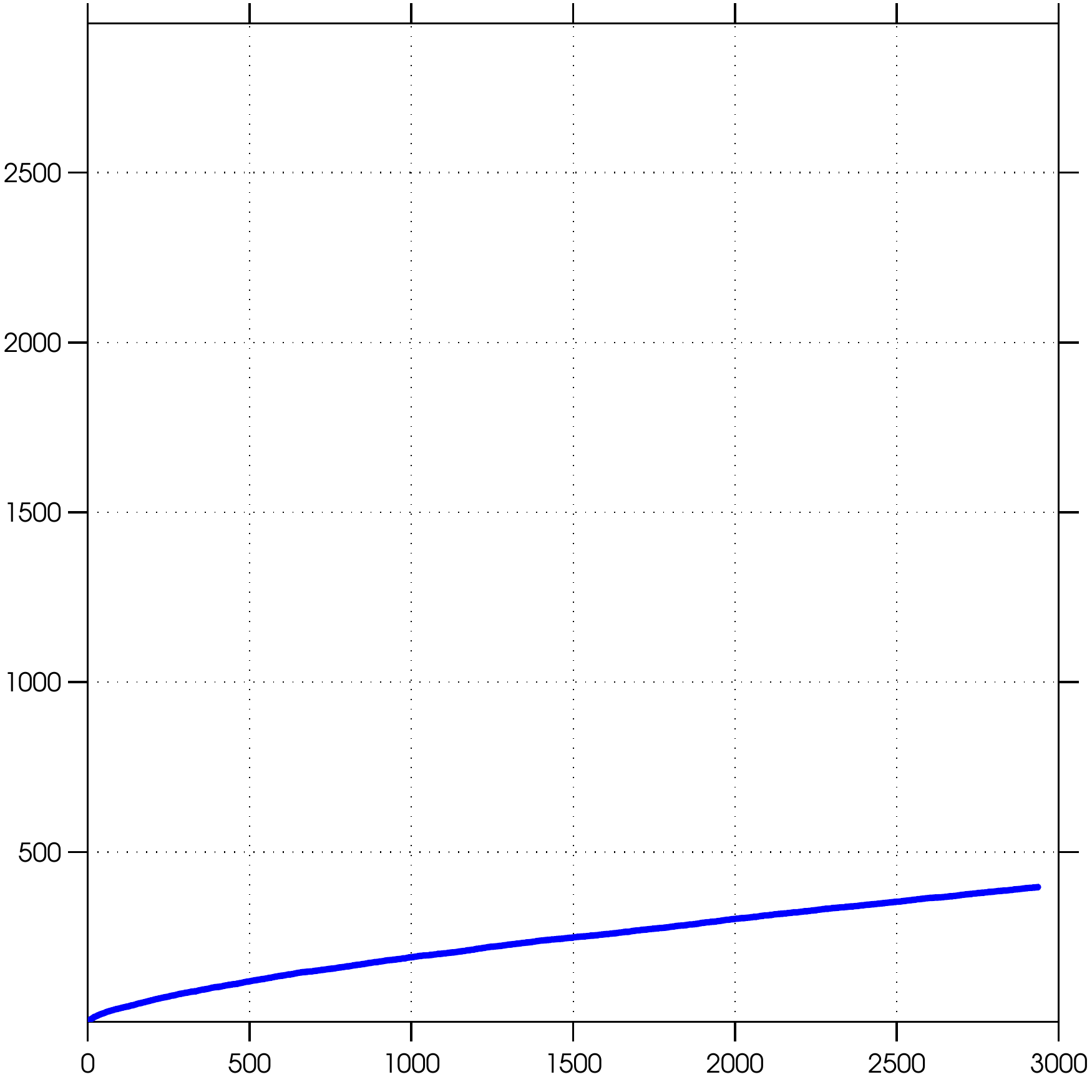}}
\subfigure[Magic]{\includegraphics[scale=0.15]{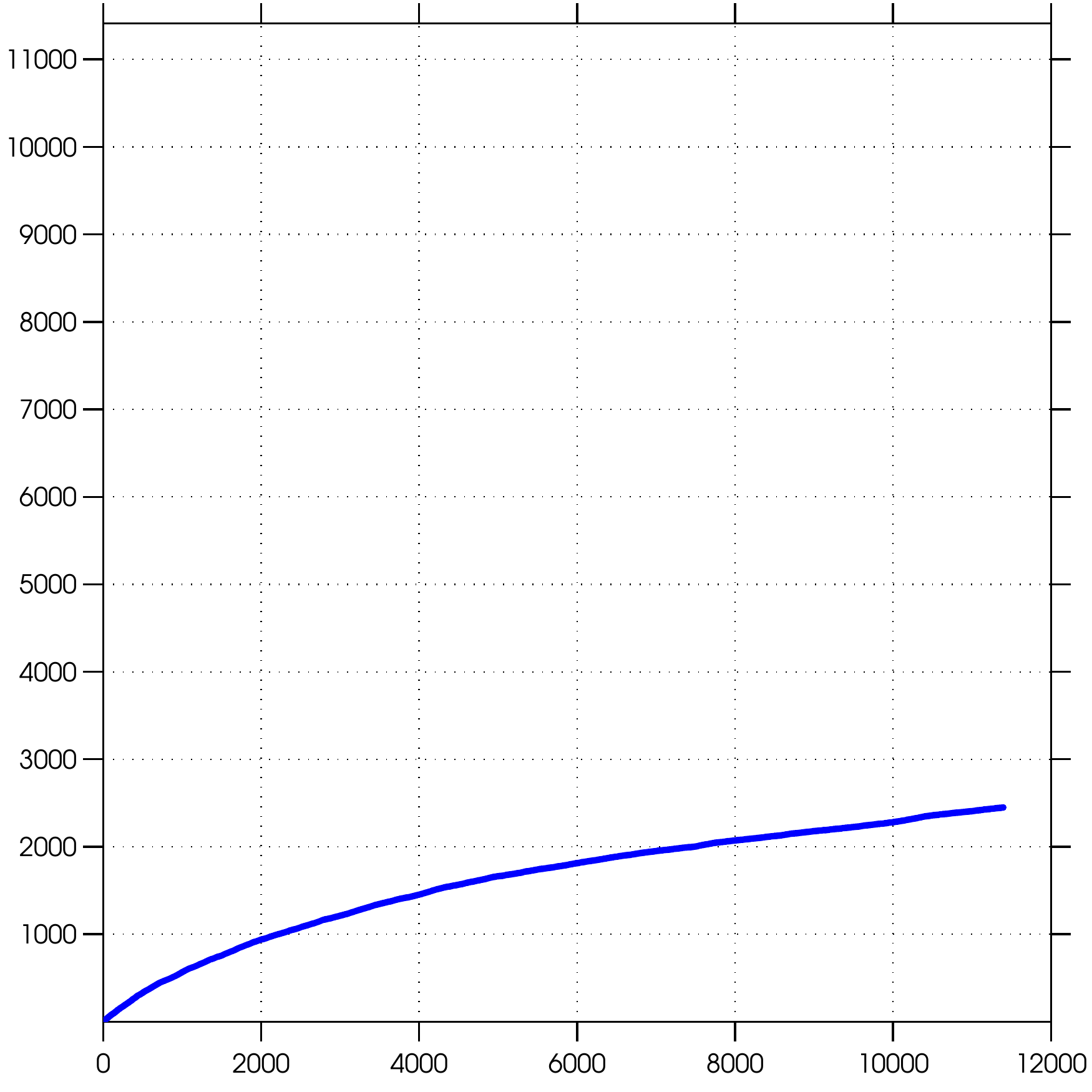}}
\subfigure[Redwine]{\includegraphics[scale=0.15]{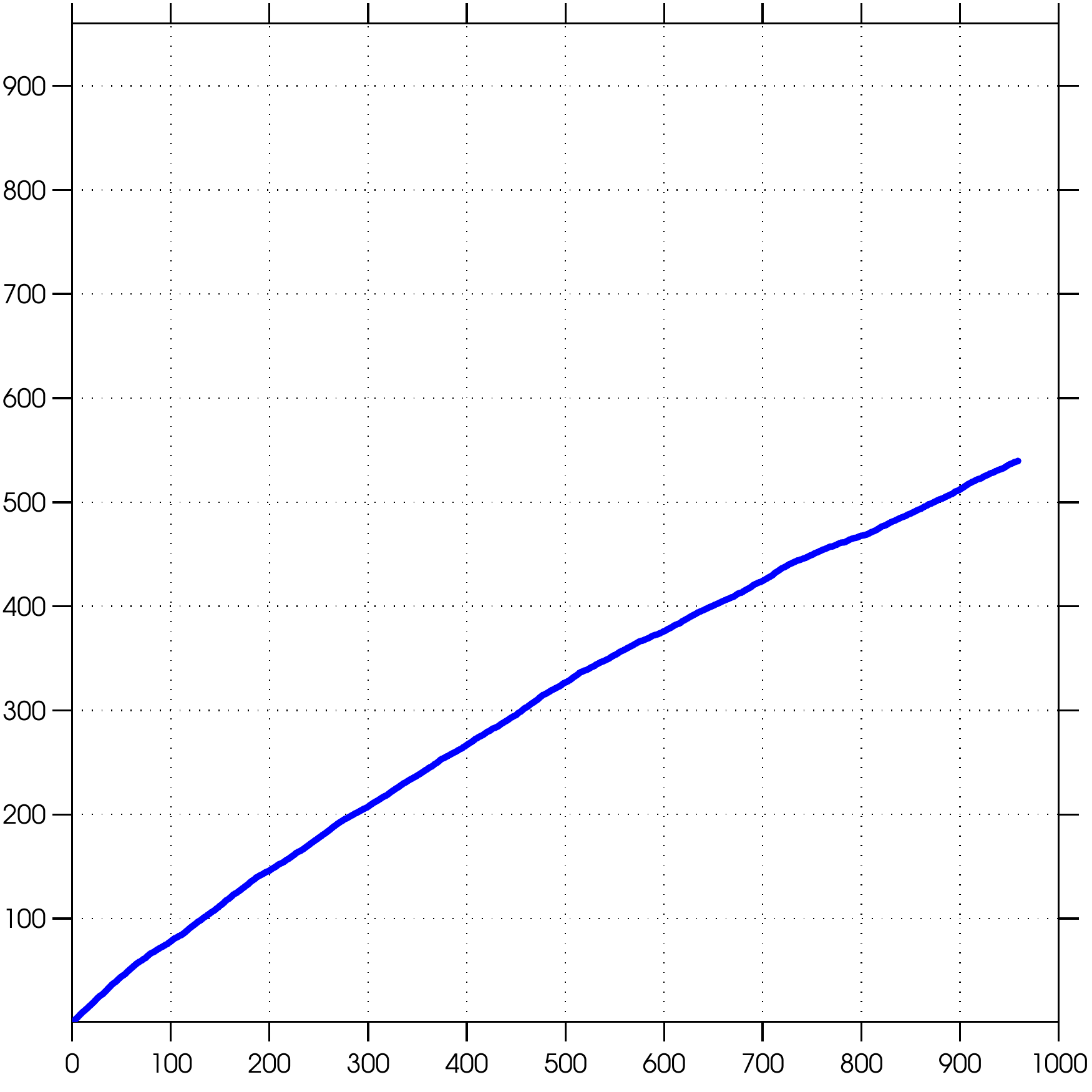}}
\caption{The number of queries made by SMD-AMA as a function of the number of data points seen. \label{fig:queries_points}}
\end{figure*} 
\begin{table*}
  \begin{center}
    \begin{tabular}{|c|c|c|c|c|}
      \hline
      Dataset& Budget& Error rate of SMD-AMA& Error rate of QBB\\
      \hline
      Abalone&441&0.2889&0.3293\\ 
      \hline
      Statlog&2984&0.0491&0.0407\\
      \hline
      MNIST&932&0.1496&0.1756\\
      \hline
      Whitewine&398&0.3075&0.3543\\
      \hline
      Redwine&540&0.2933&0.3146\\
      \hline
      Magic&2450&0.2166&0.2499\\
      \hline
    \end{tabular}
  \end{center}
  \caption{\label{tab:qbb_ama_smd}Comparison of the error rate of QBB and SMD-AMA for a given budget.}
\end{table*}
\subsection{Number of Queries Vs Number of Points Seen}
Figure~\ref{fig:queries_points} shows how the number of queries made by SMD-AMA scale with the number of points seen in the steam on all the four datasets. This scaling is almost linear in the case of Statlog. This was expected, given the fact that on Statlog, we query the labels of almost 65\% of the points in the stream. A similar result holds true even for the Redwine dataset. In contrast, on the Abalone, MNIST,Whitewine, and Magic datasets the scaling seems to be sublinear. 
\subsection{Effect of Parameter $\mu$.}
\label{sec:tradeoff}
\begin{figure}[H]
\centering
 \caption{\label{fig:tradeoff} The plots show, on the MNIST dataset, as a function of the parameter $\mu$, the test error of SMD-AMA, at the end of the stream, and the number of queries made.}
%Test error
%\subfigure[Abalone]{
    %        \includegraphics[scale=0.15]{agg_figs/}
      %    }
%\quad
\subfigure[Test Error]
          {
            \includegraphics[scale=0.15]{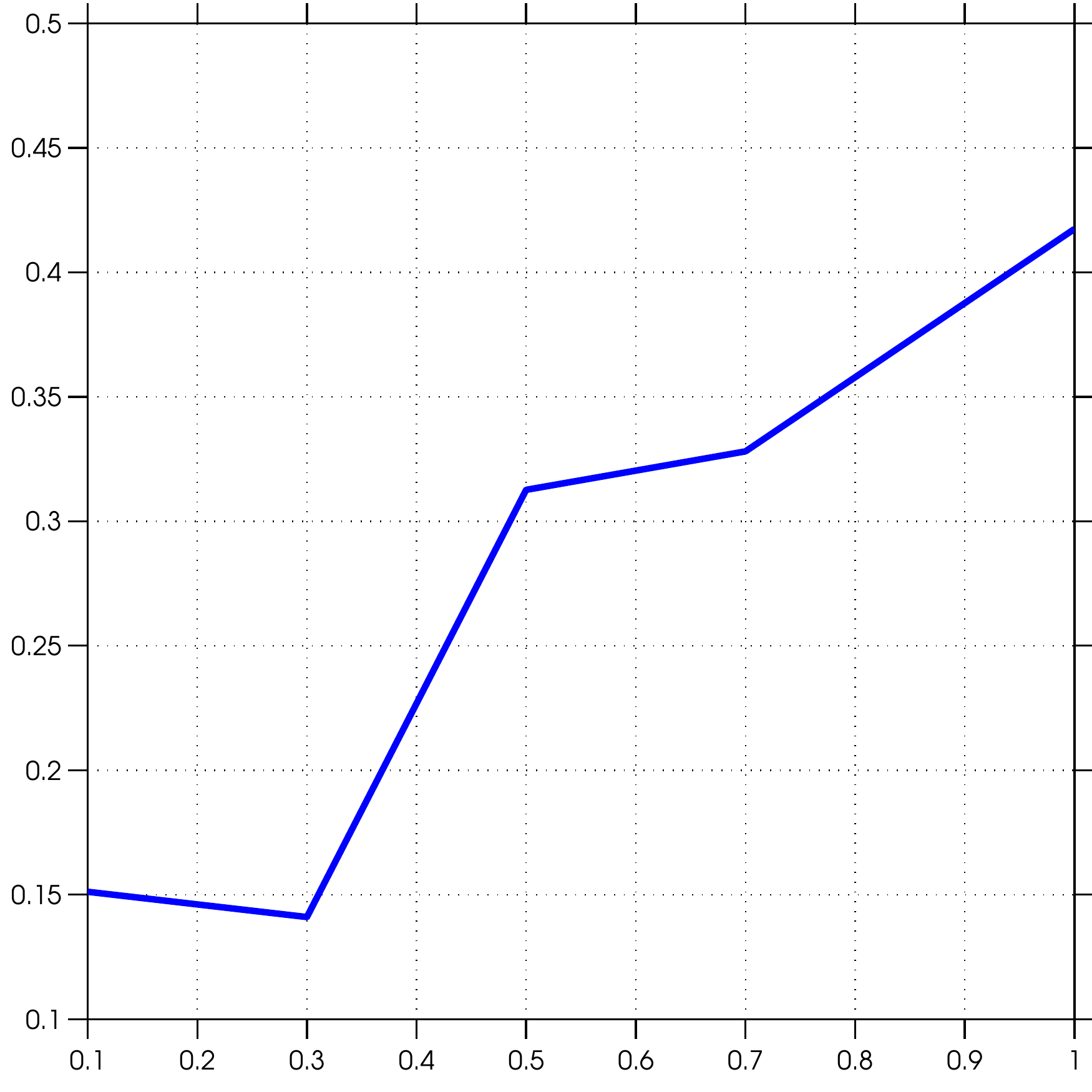}
          }
%%%%% NUMBER OF QUERIES
%\subfigure[Abalone]
  %        {
            %\includegraphics[scale=0.15]{agg_figs/}
 %         }
%\quad
\subfigure[Queries made]
          {            \includegraphics[scale=0.15]{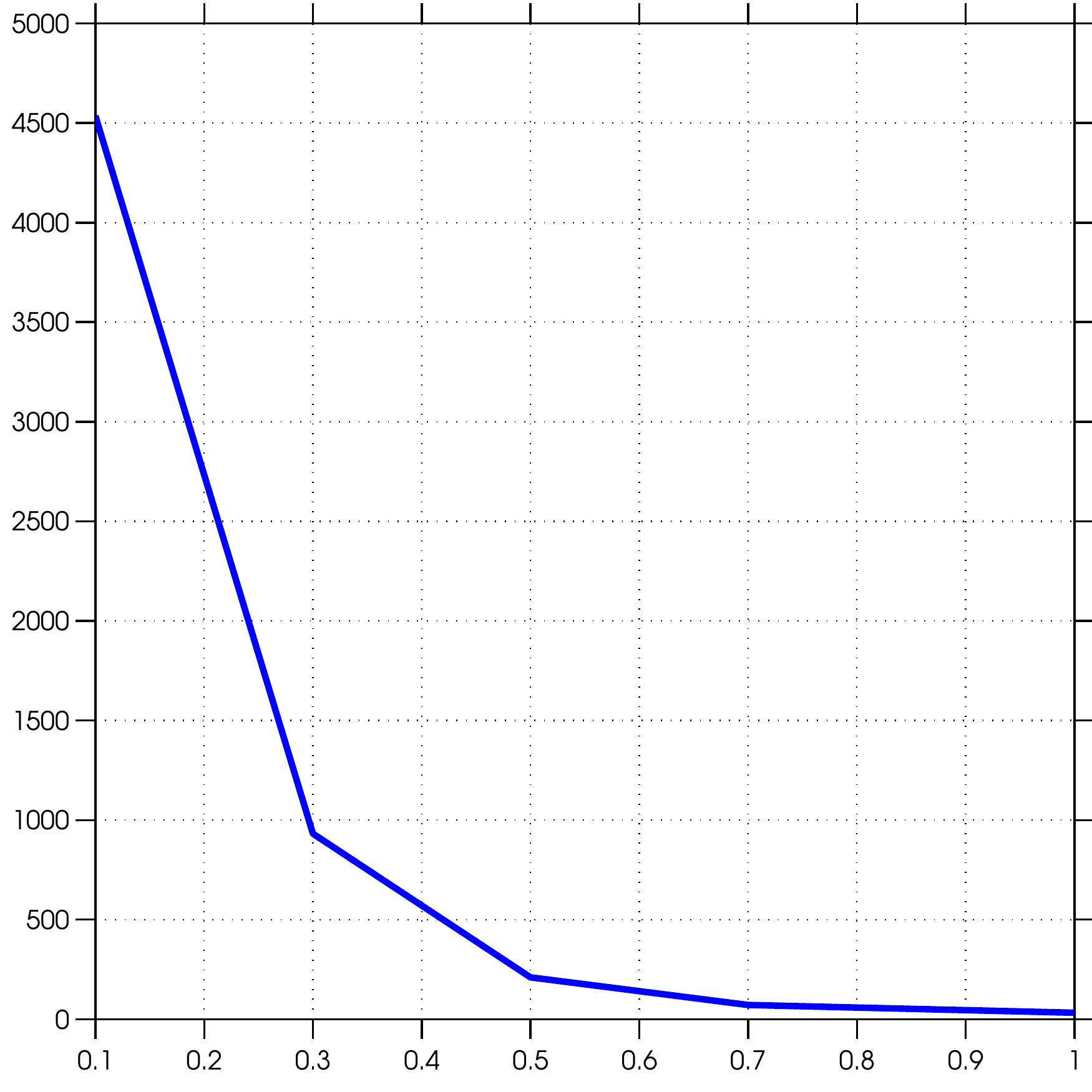}
          }
\end{figure}
%The effect of parameter $\mu$, can be studied as follows. 
From Steps 3, 5 of SMD-AMA, it is clear that $p_t\geq \epsilon_t$. Hence, 
\begin{equation*}
  \bbE[\text{Number of Queries}]\geq\sum_{t=1}^T \epsilon_t=
  \begin{cases}
    \Theta(T^{1-\mu})~\text{if}~\mu<1\\
    \Theta(\log(T))~\text{if}~\mu=1.
  \end{cases}
\end{equation*} 
Theorem~\ref{thm:excess_risk} says that the excess risk of SMD-AMA has an upper bound that scales as $O(\sqrt{T^{\mu-1}})$. Hence from this discussion we know that large $\mu$ leads to a smaller lower bound on the expected number of queries made, but a larger upper bound on the excess risk. Figure~\ref{fig:tradeoff} demonstrates, for the MNIST dataset,  the trade-off between test error and the number of label queries made by SMD-AMA by changing parameter $\mu$. As we can see from these plots, larger $\mu$ lead to small number of queries, but a larger test error.  Similar results have been observed on additional datasets.
\subsection{Comparison with QBB}
In contrast to SMD-PMA and SMD-AMA, QBB is a pool based active learning algorithm, and not a stream based active learning algorithm. Hence, QBB has access to the entire set of unlabeled data points, and in each round, can choose one data point to query. In order to provide a fair comparison of QBB and SMD-AMA, we  used the number of queries made by SMD-AMA from our first set of experiments, as a budget parameter for the QBB algorithm, and report the error rate of the hypothesis returned by SMD-AMA, and QBB at the end of the budget. It is clear from Table~\ref{tab:qbb_ama_smd} that, for all the datasets except Statlog,  QBB is significantly inferior to SMD-AMA. For the Statlog dataset, the test error of SMD-AMA and QBB are comparable, and this is possibly because the budget to our QBB experiments is large. 
\section{Conclusions and Discussions}
In this paper we considered active learning of convex aggregation of classification models. We presented a stochastic mirror descent algorithm, which uses importance weighting to obtain unbiased importance weighted stochastic gradients. We established excess risk guarantees of the resultant convex aggregation and  experimentally demonstrated the good performance of our algorithm. This work can be extended in many directions, some of which are stated below

\textbf{Systematic study of test error-number of queries tradeoff.} As shown in Section~\ref{sec:tradeoff}, parameter $\mu$ captures the tradeoff between the test error and the number of queries made. A systematic study of how the test error, and the number of queries made by SMD-AMA changes with the parameter $\mu$ could help an user to tune the value of $\mu$, in order to be on the ``right point'' of this test error-number of queries curve. A possible direction, towards this goal,  is to establish sharper excess risk guarantees for SMD-AMA and upper bound on the number of queries made by SMD-AMA, as a function of $\mu$ and problem parameters. 
%%%%%%%%%%%%%%%%%%%%%%%%%%%%%%% PAPER ENDS HERE %%%%%%%%%%%%%%%%%%%%%%%%%%%%%%%%%%%%%%%%%%%%%%%%%%%%%%%%%%%%%%%
\appendix
\section{Excess Risk Bounds}
In our paper we established an excess risk bound for the convex
aggregate learned by our active learning algorithm.  In order to establish our theorem, we need the following key lemma. This
result was first established in ~\cite{juditsky2005recursive} (See
Proposition 2 in~\cite{juditsky2005recursive}). We are stating the
result only for convenience.
\begin{lemma}
  \label{lem:agg_div}
For any $\theta\in\Theta$, and for any $t\geq 1$, we have
\begin{equation}
  \sum_{t=1}^T \la \theta_{t-1}-\theta,\nabla\cR(\theta_{t-1})\leq \beta_T\cR(\theta)-\sum_{t=1}^T \la \theta_{t-1}-\theta_t,\Delta_{t}(\theta_{t-1})\ra+\sum_{t=1}^T \frac{||g_t(\theta_{t-1})||_{\infty}^2}{2\beta_{t-1}}\nonumber.
\end{equation}
\end{lemma}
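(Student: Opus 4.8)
I read the statement through the lens of the algorithm: $\xi_t=\sum_{s\le t}g_s$ is the dual accumulator of Step 8, $\theta_t=-\nabla\cR^{*}_{\beta_t}(\xi_t)$ is the iterate of Step 10 (the maximizer defining the $\beta_t$-Fenchel dual), and $\Delta_t(\theta_{t-1})\defeq g_t-\nabla R(\theta_{t-1})$ is the stochastic-gradient noise; the gradient on the left is that of the risk $R$, which is the quantity SMD-AMA actually controls. The plan is to prove this as the regret decomposition for dual-averaging (lazy) stochastic mirror descent with the entropy prox-function and increasing weights $\beta_t$, via a potential-function telescoping argument on $\cR^{*}_{\beta_t}(\xi_t)$.

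First I would record the two structural facts. (i) Negative entropy $\cR$ is $1$-strongly convex with respect to $\|\cdot\|_1$ on $\Delta_M$; by conjugate duality this makes $\cR^{*}_{\beta}$ finite, differentiable, and $\tfrac{1}{\beta}$-smooth in the dual norm $\|\cdot\|_\infty$, with $\nabla\cR^{*}_{\beta}(\xi)=-\theta^{*}(\xi)$ the unique maximizer. (ii) Since $\cR\ge 0$ on the simplex and $\beta_t\ge\beta_{t-1}$ by Step 9, the map $\beta\mapsto\cR^{*}_{\beta}(\xi)$ is nonincreasing, so $\cR^{*}_{\beta_t}(\xi_t)\le\cR^{*}_{\beta_{t-1}}(\xi_t)$. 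Expanding $\cR^{*}_{\beta_{t-1}}(\xi_t)=\cR^{*}_{\beta_{t-1}}(\xi_{t-1}+g_t)$ around $\xi_{t-1}$ by the smoothness in (i), and using $\nabla\cR^{*}_{\beta_{t-1}}(\xi_{t-1})=-\theta_{t-1}$, yields the one-step inequality $\cR^{*}_{\beta_t}(\xi_t)\le\cR^{*}_{\beta_{t-1}}(\xi_{t-1})-\la\theta_{t-1},g_t\ra+\tfrac{\|g_t\|_\infty^2}{2\beta_{t-1}}$.

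Summing over $t=1,\dots,T$ telescopes the potential; the boundary term vanishes because $\xi_0=0$ and $\cR^{*}_{\beta_0}(0)=-\min_{\theta'\in\Delta_M}\beta_0\cR(\theta')=0$. Lower-bounding the surviving potential by evaluating its defining supremum at the fixed comparator, $\cR^{*}_{\beta_T}(\xi_T)\ge-\la\xi_T,\theta\ra-\beta_T\cR(\theta)$, and using $\xi_T=\sum_t g_t$, I would rearrange to the master inequality $\sum_{t=1}^T\la\theta_{t-1}-\theta,g_t\ra\le\beta_T\cR(\theta)+\sum_{t=1}^T\tfrac{\|g_t\|_\infty^2}{2\beta_{t-1}}$. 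Substituting $g_t=\nabla R(\theta_{t-1})+\Delta_t(\theta_{t-1})$ then isolates the deterministic term $\sum_t\la\theta_{t-1}-\theta,\nabla R(\theta_{t-1})\ra$ on the left and routes the noise into a cross term on the right, giving exactly the stated decomposition; this is the form in which SMD-AMA later takes conditional expectations and annihilates the noise via Assumption~\ref{ass:indep}. To land the cross term as $-\sum_t\la\theta_{t-1}-\theta_t,\Delta_t(\theta_{t-1})\ra$ rather than with the comparator displacement, I would run the one-step bound in its strongly-convex (primal) form, which produces the per-step displacement $\|\theta_{t-1}-\theta_t\|_1$ and the inner product against $\theta_t$ directly, and retain that displacement instead of absorbing it prematurely by Young's inequality.

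I expect the main obstacle to be the quantitative transfer in fact (i): pinning down that negative entropy has strong-convexity constant exactly $1$ in the $\ell_1$ geometry on $\Delta_M$ (essentially Pinsker's inequality) and converting this into the $\tfrac{1}{\beta_{t-1}}$-smoothness of $\cR^{*}_{\beta_{t-1}}$ in the dual $\ell_\infty$ norm, since this is what produces the precise $\tfrac{\|g_t\|_\infty^2}{2\beta_{t-1}}$ constants. The remaining delicate bookkeeping is the weight/index shift: smoothness is applied at weight $\beta_{t-1}$ while the iterate carries weight $\beta_t$, reconciled only through the monotonicity step (ii), and one must track which endpoint ($\theta_{t-1}$ versus $\theta_t$) appears in the displacement so that the $\Delta_t$ term is recorded as written. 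The telescoping and comparator bound are then routine.
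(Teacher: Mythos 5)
Your proposal follows essentially the same route as the paper's proof: the potential-function telescoping of $\cR^{*}_{\beta_t}(\xi_t)$, using $1$-strong convexity of the entropy in $\ell_1$ (hence $\tfrac{1}{\beta}$-smoothness of $\cR^{*}_{\beta}$ in $\ell_\infty$), monotonicity of $\beta\mapsto\cR^{*}_{\beta}(\xi)$ from $\cR\geq 0$ and increasing $\beta_t$, the identity $\nabla\cR^{*}_{\beta_{t-1}}(\xi_{t-1})=-\theta_{t-1}$, the vanishing boundary term $\cR^{*}_{\beta_0}(0)=0$, the Fenchel comparator bound, and finally the substitution $g_t=\nabla R(\theta_{t-1})+\Delta_t(\theta_{t-1})$. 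One remark: the displacement $\theta_{t-1}-\theta_t$ in the lemma's noise term appears to be a typo carried over from the source --- the paper's own derivation, and the way the theorem's proof actually invokes the lemma, both use the comparator displacement $\theta_{t-1}-\theta$ (which is also what is needed for the martingale term to vanish in expectation), so this is exactly what your master inequality yields and your extra detour through a ``primal form'' of the one-step bound is unnecessary.
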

\begin{proof}
  Let 
\begin{equation}
  \cR(\theta)=
  \begin{cases}
    \log(M)+\sum_{j=1}^M \theta_j\log(\theta_j),~\text{if}~ \theta\in \Delta_M\\
    \infty~\text{otherwise}.
  \end{cases}
\end{equation}
$\cR(\theta)$ is the normalized Entropy function that has been widely used in the mirror-descent algorithm, when optimizing over a simplex. The $\beta$ Fenchel dual of $\cR$ is
\begin{equation}
  \Rbetastar(z)=\beta\log\left(\frac{1}{M}\sum_{j=1}^M e^{-z_j/\beta}\right)
\end{equation}
Now, since $\Rbetastar$ is continously differentiable, we get 
\begin{equation}
  \label{eqn:taylor}
  \cR_{\beta_{t-1}}^{*}(\xi_t)=\cR_{\beta_{t-1}}^{*}(\xi_{t-1})+\int_0^1 \la\xi_t-\xi_{t-1},\nabla\cR_{\beta_{t-1}}^{*}(\tau\xi_{t}+(1-\tau)\xi_{t-1})\ra~\mathrm{d}\tau
\end{equation}
Since, $\cR$ is 1-strongly convex w.r.t. the $||\cdot||_1$ norm, the $\beta-$ Fenchel dual of $\cR$ is continously differentiable and satisfies, for each pair $z,\tilde{z}$ in the domain of $\cR_{\beta}$
\begin{equation}
  \label{eqn:lip}
  ||\nabla\cR_{\beta}^{*}(z)-\nabla\cR_{\beta}^{*}(\tilde{z})||_1\leq \frac{1}{\beta} ||z-\tilde{z}||_{\infty}. 
\end{equation}
By definition, $\xi_t=\xi_{t-1}+g_t(\theta_{t-1})$. Substituting this in Equation~\ref{eqn:taylor}, we get
\begin{align}
  \cR_{\beta_{t-1}}^{*}(\xi_t)&=\cR_{\beta_{t-1}}^{*}(\xi_{t-1})+\int_0^1\la g_t(\theta_{t-1}),\nabla \cR_{\beta_{t-1}}^{*}(\tau\xi_t+(1-\tau)\xi_{t-1})\ra~\mathrm{d}\tau\nonumber\\
  \begin{split}
    &=\cR_{\beta_{t-1}}^{*}(\xi_{t-1})+\int_0^1\la g_t(\theta_{t-1}),\nabla\cR_{\beta_{t-1}}^{*}(\tau\xi_t+(1-\tau)\xi_{t-1})-\nabla\cR_{\beta_{t-1}}^{*}(\xi_{t-1})\ra~\mathrm{d}\tau\nonumber\\
    &\qquad+\int_0^1 \la g_t(\theta_{t-1}),\nabla\cR_{\beta_{t-1}}^{*}(\xi_{t-1})\ra~\mathrm{d}\tau
\end{split}\nonumber\\
  &\leq \cR^{*}_{\beta_{t-1}}(\xi_{t-1})+\frac{1}{2\beta_{t-1}}||g_t(\theta_{t-1})||_{\infty}^2+\la g_t(\theta_{t-1})\nabla\cR^{*}_{\beta_{t-1}}(\xi_{t-1})\ra\label{eqn:seq},
\end{align}
where in the last step we applied H\"older's inequality.
By design, our sequence $\beta_0,\beta_1,\ldots$ is an increasing sequence. Given $\beta$, $\cR_{\beta}^{*}$ is a non-increasing function. Hence, by Equation~\ref{eqn:seq}, we get
\begin{equation}
  \label{eqn:monot}
  \cR_{\beta_t}^{*}(\xi_t)\leq \cR_{\beta_{t-1}}^{*}(\xi_t)\leq \cR^{*}_{\beta_{t-1}}(\xi_{t-1})+\la g_t(\theta_{t-1}),\nabla\cR^{*}_{\beta_{t-1}}(\xi_{t-1})\ra+\frac{1}{2\beta_{t-1}}||g_t(\theta_{t-1})||_{\infty}^2
\end{equation}
Summing over $t\geq 1$, and telescoping the sum,  we get 
\begin{equation}
  \cR^{*}_{\beta_T}(\xi_T)-\cR^{*}_{\beta_0}(\xi_0)\leq \sum_{t=1}^T \la g_t(\theta_{t-1}),\nabla\cR_{\beta_{t-1}}(\xi_{t-1})\ra+\sum_{t=1}^T \frac{1}{2\beta_{t-1}}||g_t(\theta_{t-1})||_{\infty}^2
\end{equation}
Since, by definition of the mirror-descent procedure, $\nabla\cR^{*}_{\beta_{t-1}}(\xi_{t-1})=-\theta_{t-1}$, hence we get 
\begin{equation}
  \cR_{\beta_T}^{*}(\xi_T)-\cR_{\beta_0}^{*}(\xi_0)\leq -\sum_{t=1}^T \la g_t(\theta_{t-1}),\theta_{t-1}\ra+\sum_{t=1}^T \frac{1}{2\beta_{t-1}}||g_t(\theta_{t-1})||_{\infty}^2.
\end{equation}
Rearranging, and using the fact that $\xi_T=\sum_{t=1}^T g_t(\theta_{t-1})$, we get that for any $\theta\in\Delta_M$, 
\begin{equation}
  \label{eqn:temp1}
  \sum_{t=1}^T \la \theta_{t-1}-\theta,g_t(\theta_{t-1})\ra\leq \cR^{*}_{\beta_0}(0)-\cR^{*}_{\beta_T}(\xi_T)-\la\xi_T,\theta\ra+\sum_{t=1}^T\frac{||g_t(\theta_{t-1})||_{\infty}^2}{2\beta_{t-1}}.
\end{equation}
Let, $\Delta_{t}(\theta_{t-1})\defeq g_t(\theta_{t-1})-\nabla R(\theta_{t-1})$.  By definition, $\bbE \Delta_{t}(\theta_{t-1})=0$. Replacing, for $\Delta_{t}(\theta_{t-1})$ in Equation~\ref{eqn:temp1}, we get 
\begin{equation}
  \label{eqn:temp2}
  \sum_{t=1}^T \la \theta_{t-1}-\theta,\nabla R(\theta_{t-1})+\Delta_t(\theta_{t-1})\ra \leq \cR_{\beta_0}^{*}(0)-\cR_{\beta_T}^{*}(\xi_T)-\la \xi_T,\theta\ra+\sum_{t=1}^T \frac{||g_t(\theta_{t-1})||_{\infty}^2}{2\beta_{t-1}}
\end{equation} 
\end{proof}
Sine, $\cR^{*}_{\beta}$ is the $\beta$- Fenchel dual of $\cR$, and $\xi_T\in\bbE^{*}$, i.e. 
\begin{equation}
  \beta\cR(\theta)=\sup_{z\in E^{*}}[-z^T\theta-\cR^{*}_{\beta}(z)], 
\end{equation}
this means that  
\begin{equation}
  \label{eqn:temp3}
  -\la\theta,\xi_T\ra -\cR^{*}_{\beta_T}(\xi_T)\leq \beta_T \cR(\theta) 
\end{equation}
Putting together Equations~(\ref{eqn:temp2},~\ref{eqn:temp3}),  we get 
\begin{align}
  \sum_{t=1}^T \la \theta_{t-1}-\theta,\nabla\cR(\theta_{t-1})\ra&\leq \cR_{\beta_0}(0)+\beta_T\cR(\theta)-\sum_{t=1}^T \la \theta_{t-1}-\theta_t, \Delta_t(\theta_{t-1})\ra+\sum_{t=1}^T \frac{||g_t(\theta_{t-1}||_{\infty}^2}{2\beta_{t-1}}\nonumber\\
  &\leq \beta_T\cR(\theta)-\sum_{t=1}^T \la \theta_{t-1}-\theta_t,\Delta_{t}(\theta_{t-1})\ra+\sum_{t=1}^T \frac{||g_t(\theta_{t-1})||_{\infty}^2}{2\beta_{t-1}}\nonumber.
\end{align}
This completes our proof.
\bibliographystyle{plainnat}
\footnotesize{\bibliography{agg.bib}}
\end{document}